
\documentclass{article}
\usepackage[table,xcdraw,dvipsnames]{xcolor}

\usepackage{microtype}
\usepackage{graphicx}
\usepackage{subcaption}
\usepackage{booktabs} 

\usepackage[accepted]{icml2025}
\usepackage{hyperref}
\usepackage{amsthm}
\usepackage{graphicx}	
\usepackage{amsmath}	
\usepackage{amssymb}	
\usepackage{booktabs}
\usepackage{times}
\usepackage{caption}
\usepackage{microtype}
\usepackage{epsfig}
\usepackage{caption}
\usepackage{float}
\usepackage{placeins}
\usepackage{color, colortbl}
\usepackage{stfloats}
\usepackage{enumitem}
\usepackage{tabularx}
\usepackage{xstring}
\usepackage{multirow}
\usepackage{xspace}
\usepackage{url}
\usepackage{subcaption}
\usepackage{makecell}
\usepackage[hang,flushmargin]{footmisc}

\newif\ifreview 
\newif\ifarxiv 
\newif\ifcamera 
\newif\ifrebuttal 

\ifcamera \usepackage[accsupp]{axessibility} \fi

\usepackage{amsthm}





\ifarxiv  \fi

\newcommand{\R}[1]{{%
    \textbf{%
        \ifstrequal{#1}{1}{\textcolor{red}{R#1}}{%
        \ifstrequal{#1}{2}{\textcolor{blue}{R#1}}{%
        \ifstrequal{#1}{3}{\textcolor{magenta}{R#1}}{%
        \ifstrequal{#1}{4}{\textcolor{teal}{R#1}}{%
                           \textcolor{cyan}{R#1}%
        }}}}%
    }%
}}



\usepackage{icml2025}

\usepackage{amsmath}
\usepackage{amssymb}
\usepackage{mathtools}
\usepackage{amsthm}
\usepackage{mdframed}
\usepackage{tikz}
\usepackage{etoolbox}
\usepackage[capitalize,noabbrev]{cleveref}
\usepackage{thmtools}

\theoremstyle{plain}
\newtheorem*{theorem*}{Theorem}
\newtheorem{theorem}{Theorem}[section]
\newtheorem{proposition}[theorem]{Proposition}

\newtheorem{criteria}[theorem]{Criterion}

\theoremstyle{definition}

\newtheorem{theoremx}{Theorem}[section]
\theoremstyle{remark}

\usepackage{mdframed}  

\surroundwithmdframed[
  linewidth=0.5pt,           
  linecolor=gray!20,         
  tikzsetting={dash pattern=on 3pt off 3pt}, 
  backgroundcolor=gray!10,    
  roundcorner=4pt,           
  innerleftmargin=6pt,       
  innerrightmargin=6pt,
  innertopmargin=8pt,
  innerbottommargin=6pt
]{theorem}

\surroundwithmdframed[
  linewidth=0.5pt,
  linecolor=gray!20,
  tikzsetting={dash pattern=on 3pt off 3pt},
  backgroundcolor=gray!10,
  roundcorner=4pt,
  innerleftmargin=6pt,
  innerrightmargin=6pt,
  innertopmargin=8pt,
  innerbottommargin=6pt
]{proposition}

\surroundwithmdframed[
  linewidth=0.5pt,
  linecolor=gray!20,
  tikzsetting={dash pattern=on 3pt off 3pt},
  backgroundcolor=gray!10,
  roundcorner=4pt,
  innerleftmargin=6pt,
  innerrightmargin=6pt,
  innertopmargin=8pt,
  innerbottommargin=6pt
]{criteria}

\surroundwithmdframed[
  linewidth=0.5pt,
  linecolor=gray!20,
  tikzsetting={dash pattern=on 3pt off 3pt},
  backgroundcolor=gray!10,
  roundcorner=4pt,
  innerleftmargin=6pt,
  innerrightmargin=6pt,
  innertopmargin=8pt,
  innerbottommargin=6pt
]{theoremx}


\begin{document}

\twocolumn[
\icmltitle{SADA: Stability-guided Adaptive Diffusion Acceleration}
\icmlsetsymbol{equal}{*}

\begin{icmlauthorlist}
\icmlauthor{Ting Jiang}{equal,yyy}
\icmlauthor{Yixiao Wang}{equal,comp}
\icmlauthor{Hancheng Ye}{equal,yyy}
\icmlauthor{Zishan Shao}{yyy,comp}
\icmlauthor{Jingwei Sun}{yyy}
\icmlauthor{Jingyang Zhang}{yyy}
\icmlauthor{Zekai Chen}{yyy}
\icmlauthor{Jianyi Zhang}{yyy}
\icmlauthor{Yiran Chen}{yyy}
\icmlauthor{Hai Li}{yyy}
\end{icmlauthorlist}

\icmlaffiliation{yyy}{Department of Electrical and Computer Engineering, Duke University, Durham, U.S.A}
\icmlaffiliation{comp}{Department of Statistical Science, Duke University, Durham, U.S.A}




\icmlcorrespondingauthor{Ting Jiang}{justin.jiang@duke.edu}

\icmlkeywords{Machine Learning, ICML}

\vskip 0.3in




]



\printAffiliationsAndNotice{\icmlEqualContribution} 
\title{\paperTitle}
\author{\authorBlock}

\begin{figure*}[ht] 
    \centering\includegraphics[width=\textwidth]{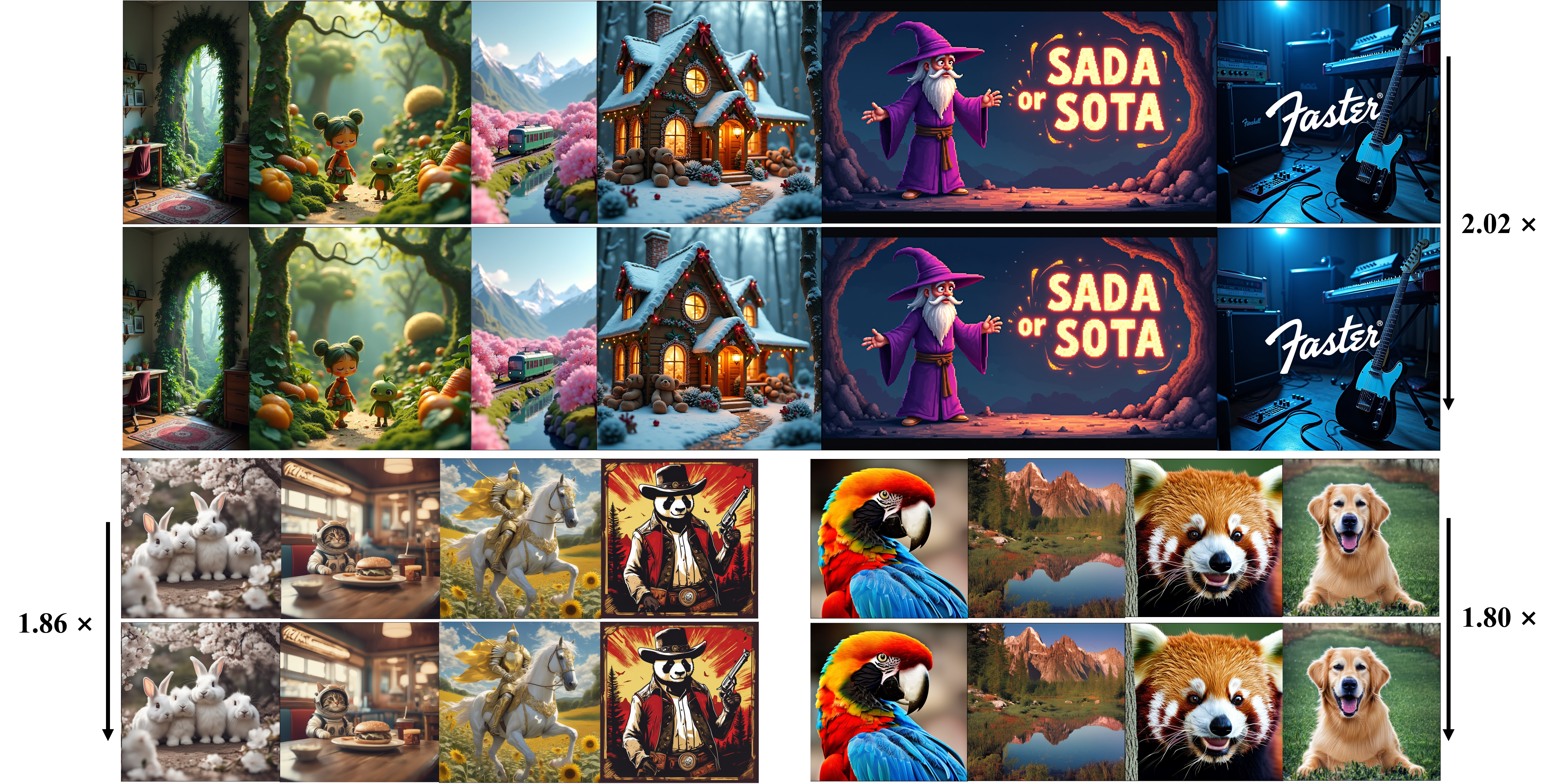}
    \caption{Accelerating \{\texttt{Flux, SDXL, SD-2}\} by \{2.02$\times$, 1.86$\times$, 1.80$\times$\} with Stability-guided Adaptive Diffusion Acceleration with 50 \{Diffusion, Flow-matching\} inference steps.}
    \label{fig:demo}
\end{figure*}

\begin{abstract}
Diffusion models have achieved remarkable success in generative tasks but suffer from high computational costs due to their iterative sampling process and quadratic‐attention costs. 
Existing training-free acceleration strategies that reduce per-step computation cost, while effectively reducing sampling time, demonstrate low faithfulness compared to the original baseline. 
We hypothesize that this fidelity gap arises because (a) different prompts correspond to varying denoising trajectory, and (b) such methods do not consider the underlying ODE formulation and its numerical solution. 
In this paper, we propose \textbf{Stability-guided Adaptive Diffusion Acceleration (SADA)}, a novel paradigm that unifies step-wise and token-wise sparsity decisions via a single stability criterion to accelerate sampling of ODE-based generative models (Diffusion and Flow-matching). 
For (a), SADA adaptively allocates sparsity based on the sampling trajectory. 
For (b), SADA introduces principled approximation schemes that leverage the precise gradient information from the numerical ODE solver.
Comprehensive evaluations on SD‐2, SDXL, and Flux using both EDM and DPM++ solvers reveal consistent $\ge 1.8\times$ speedups with minimal fidelity degradation (LPIPS $\leq 0.10$ and FID $\leq 4.5$) compared to unmodified baselines, significantly outperforming prior methods.
Moreover, SADA adapts seamlessly to other pipelines and modalities: It accelerates ControlNet without any modifications and speeds up MusicLDM by \(1.8\times\) with \(\sim 0.01\) spectrogram LPIPS.
Our code is available at: \textcolor{blue}{\href{https://github.com/Ting-Justin-Jiang/sada-icml}{https://github.com/Ting-Justin-Jiang/sada-icml}}.

\end{abstract}

\section{Introduction}
\label{sec:intro}

Recent advancements in diffusion models have set new benchmarks across various tasks, including image, video, text, and audio generation \cite{sohl2015deep, song2020denoising}, significantly enhancing productivity and creativity. 
However, the deployment of these models at high resolutions faces two fundamental efficiency bottlenecks: (i) the iterative nature of the denoising process and (ii) the quadratic complexity of attention mechanisms. 
To address these challenges, existing training-free acceleration methods have primarily focused on two corresponding fronts: (i) reducing the number of inference steps~\citep{song2020denoising, lu2022dpmpp, karras2022elucidating}, (ii) reducing the computational cost per step~\citep{bolya2023token, ma2024deepcache, zhao2024real, wangpfdiff, zou2024accelerating, ye2024training}. 

Sampling with generative models can be cast as transporting between two distributions through a reverse ordinary differential equation (ODE) \cite{song2019generative, song2020score, lipmanflow, liu2022flow}. 
Numerical ODE solvers \citep{karras2022elucidating, lu2022dpm, lu2022dpmpp} that significantly reduce the number of inference steps while preserving sample fidelity have become a fundamental component for practical workflows.
Orthogonal to these advanced schedulers, recent works that reduce per-step computational cost in category (ii) exploit the sparsity empirically observed in pretrained architectures \citep{rombach2022high, chen2024pixarta, podell2023sdxl, esser2024scaling} during the sampling procedure, either on a token-wise \cite{bolya2023token, kim2024token, zou2024accelerating} or step-wise \cite{ma2024deepcache, zhao2024real, ye2024training} granularity.

While effectively reducing sampling latency, these architectural strategies in category (ii) demonstrate low faithfulness compared to original samples, as measured in LPIPS \citep{zhang2018unreasonable} and FID \citep{heusel2017gans}. 
We hypothesize that this fidelity gap arises because:
(a) Fixed \cite{ma2024deepcache} or pre-searched \cite{yuan2024ditfastattn} sparsity patterns cannot adapt to the variability of each prompt’s denoising trajectory, and (b) Such methods do not explicitly leverage the underlying ODE formulation of the denoising process, nor interplay with the specific ODE-solver used.

Motivated by these observations, we introduce \emph{\textbf{S}tability‐guided \textbf{A}daptive \textbf{D}iffusion \textbf{A}cceleration} (\textbf{SADA}), a training‐free framework that dynamically exploits both step‐wise and token‐wise sparsity via a unified stability criterion. SADA addresses (a) by adaptively allocating computation along the denoising trajectory (See Section \ref{sec:modeling}), and (b) by employing approximation schemes that leverage precise trajectory gradient information from the chosen ODE solver (See Section \ref{sec:am_method}).

Specifically, leveraging the second-order difference of the precise gradient $y_t = \frac{\text{d}x_t}{\text{d}t}$ according to its definition \cite{song2020denoising, lipmanflow}, SADA makes better decisions on sparsity allocation and principled approximation. 
For (a), we propose the stability criterion (Criterion \ref{thm:criteria}) as the second-order difference of $y_t$, which measures the local dynamics of the denoising trajectory. 
Implemented in a plug-and-play fashion, SADA dynamically identifies the sparsity mode \{\texttt{token-wise, step-wise, multistep-wise}\} at each timestep.
For (b), we incorporate the gradient calculated by a specific ODE solver into both the stability criterion and the approximation correction.
In particular, we derive two approximation schemes that unify the $x_0^t$ and $x_t$ trajectory, yielding a principled estimate of the per‐step clean sample \(x_0^t\) compatible with advanced diffusion schedulers.

Both theoretical and empirical analyses indicate that SADA is compatible with different backbone architectures \citep{ronneberger2015u, peebles2023scalable} and solvers, and able to accelerate generative modeling in various modalities \cite{chen2024musicldm} and downstream tasks \cite{zhang2023adding} without additional training. 
In all tested scenarios, SADA achieves substantially better faithfulness ($\leq 0.100$ LPIPS) than existing strategies\cite{ye2024training, ma2024deepcache,liu2025timestep}, while consistently delivering $\geq 1.8 \times$ speedup. These results establish SADA as a practical plug-in for high-throughput, high-fidelity generative sampling.

In summary, the core contributions of our work are: (i) We introduce \textbf{SADA}, a training‐free framework that leverages a stability criterion to adaptively sample the denoising process with principled approximation.
(ii) To our best knowledge, SADA is the first paradigm that directly bridges numerical solvers of the sampling trajectory with sparsity‐aware architecture optimizations.
(iii) Extensive experiments conducted on various baselines, solvers, and prediction frameworks demonstrate the effectiveness of SADA to existing acceleration strategies. 
Moreover, SADA can be easily adapted to new generation pipelines and modalities.

\section{Related Work}
\label{sec:related}
\paragraph{Diffusion models}\cite{sohl2015deep, ho2020denoising, song2020score, nichol2021improved} have emerged as a transformative framework for generative modeling. 
They generate high-fidelity samples by iteratively reversing a stochastic process that gradually converts data into pure noise, delivering state-of-the-art results across a wide range of tasks. 
In the realm of image synthesis, Stable Diffusion \cite{rombach2022high} enhances computational efficiency by mapping high-dimensional pixel inputs to a compact latent space via a Variational Autoencoder \cite{kingma2014auto}. 
Moreover, it incorporates text conditioning through a pre-trained text encoder \cite{radford2021learning, oquab2024dinov2}, driven by classifier-free guidance \cite{ho2022classifier}. 
Building upon this foundation, later members of the latent diffusion family \cite{podell2023sdxl, chen2024pixarta, flux1} incorporate Transformers \cite{vaswani2017attention} as the primary architectural backbone, enabling improved scalability \cite{peebles2023scalable} and higher-resolution generation \cite{chen2024pixart, gao2024lumina}. 
However, these advancements come at the cost of efficiency: the \textit{iterative nature of the denoising process} and the \textit{quadratic complexity of self-attention} significantly slow down inference.

\paragraph{Accelerating Diffusion Models} To mitigate the above two bottlenecks, existing acceleration strategies typically focus on two fronts: \textit{(1) reducing the number of inference steps}, and \textit{(2) reducing the computational cost per step}. 

The first paradigm involves interpreting the underlying stochastic process, developing numerical methods, and introducing novel objectives and frameworks.
DDIM \cite{song2020denoising} reformulates the diffusion process as a non-Markovian procedure. 
Subsequent work identifies the diffusion process as a stochastic differential equation \cite{song2019generative} and converts it into solving a probabilistic-flow ordinary differential equation (PF-ODE) that could be parameterized by various model output objectives \cite{song2020score, chen2023restoration, kim2023consistency, kingma2023understanding}. 
Building on this perspective, advanced numerical solvers significantly decrease the required number of function evaluations. 
Among these, the Euler Discrete Multistep (EDM) solver \cite{karras2022elucidating, liu2023unified} has been applied effectively, while the DPM-Solver series \cite{lu2022dpm, lu2022dpmpp, zheng2023dpm} further enhance efficiency and stability by computing the linear component analytically with respect to the semi-linearity of PF-ODE.
More recent works further explore the sampling trajectory of generative modeling. 
Consistency models \cite{song2023consistency, lu2024simplifying} establish a theoretical one‐step mapping from any point on the PF-ODE to the data distribution, while the flow matching \cite{liu2022flow, albergo2022building, lipmanflow} casts generative modeling as directly learning the ODE that transports samples from the noise distribution to the data distribution.

The second paradigm mainly leverages either step-wise or token-wise sparsity in diffusion models to reduce computation overhead.
On the step-wise front, DeepCache \cite{ma2024deepcache} accelerates the U-Net \cite{ronneberger2015u} based denoising model by caching high-level features in deeper layers.
Feature caching methods \cite{zhao2024real, ma2024learning, wimbauer2024cache, zhen2025token, saghatchian2025cached, liu2024timestep, chen2024delta,shenmd, liu2025faster} shorten sampling latency by reusing attention outputs.
PF-Diff \cite{wangpfdiff} leverages previous model outputs to predict a look-ahead term in the ODE solver, analogous to the use of Nesterov momentum.
These step-wise approaches typically employ a fixed schedule—determined via a hyperparameter—to guide the sparsity pattern during sampling.
In parallel, token reduction strategies \cite{bolya2023token, kim2024token, zhen2025token, saghatchian2025cached} exploit the redundancy in image pixels to eliminate unnecessary tokens, thereby reducing the attention module’s computational load. 
ToCa series \cite{zou2024accelerating, zhang2024token} combines caching and token pruning.
DiTFastAttn \cite{yuan2024ditfastattn, zhang2025ditfastattnv2} compresses the attention module leveraging the redundancies identified after a brief search. 

Despite these advancements, none of these approaches dynamically allocate step-wise and token-wise sparsity to accelerate sampling in multi-granularity levels.
Moreover, their end-to-end acceleration configurations typically hinge on specific hyperparameters or pre-trained models.
Adaptive Diffusion \cite{ye2024training} offers a promising perspective by adjusting its acceleration mode based on the prompt.
However, it still requires hyperparameter tuning and does not correct for approximation error. In contrast, we frame diffusion acceleration as a stability‐prediction problem rather than merely controlling error accumulation. SADA requires minimal hyperparameter tuning and incorporates principled approximation schemes that directly match our stability criterion.
\section{Proposed Method}
\label{sec:method}

\begin{figure*}[tp]
    \centering
    \includegraphics[width=\textwidth]{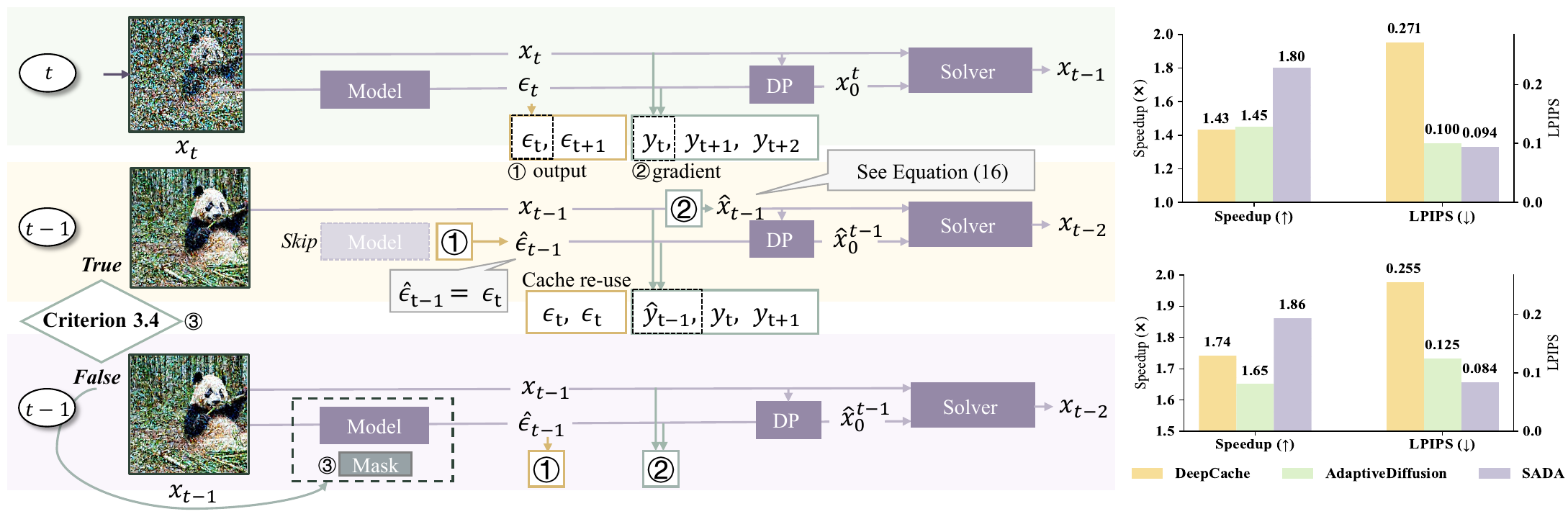}
    \caption{Overview paradigm of \textbf{SADA}.
    The sparsity mode (middle: \texttt{step-wise}, bottom: \texttt{token-wise}) at timestep $t-1$ is adaptively identified by the stability Criterion \ref{thm:criteria} after fresh computation at timestep $t$.
    Note that ``DP'' in the pipeline stands for ``Data Prediction''.
    \textbf{Right}: Visualization of SADA and baseline methods' performance in terms of faithfulness and efficiency. Our methods significantly outperforms existing baselines \{\texttt{DeepCache, AdaptiveDiffusion} \} on both metrics, using \{\texttt{SD-2} (Top), \texttt{SDXL} (Bottom)\} with DPM-solver++ 50 steps.} 
    \label{fig:pipeline}
\end{figure*}

\subsection{Preliminary}
\paragraph{Diffusion Models}\cite{sohl2015deep, ho2020denoising, song2020denoising, nichol2021improved} formulate sample generation as an iterative denoising process. 
Starting from a Gaussian sample \(x_T \sim \mathcal{N}(0,\mathbf{I})\), these models progressively recover a clean signal over $T$ timesteps. 
The forward process is defined by a variance schedule \(\{\beta_t\}\) with \(\alpha_t = 1 - \beta_t\) and \(\bar{\alpha}_t = \prod_{i=1}^{t}\alpha_i\). The reverse (sampling) process is then modeled as:
\begin{equation}
    p_{\theta} (x_{t-1} \mid x_t) = \mathcal{N} \left( x_{t-1}; \mu_{\theta}(x_t, t), \beta_t \mathbf{I} \right),
    \label{eq:diff}
\end{equation}
where the mean $\mu_{\theta}(x_t, t) = \frac{1}{\sqrt{\alpha_t}} \left( x_t - \frac{1 - \alpha_t}{\sqrt{1 - \bar{\alpha}_t}} \epsilon_\theta(x_t, t) \right)$ is parameterized by a noise 
prediction model $\epsilon_\theta(x_t, t)$. Meanwhile, the data reconstruction \(x_{0}^{t}\) could be calculated as:
\begin{equation}
x_0^{t}
= \frac{1}{\sqrt{\bar{\alpha}_t}}\Bigl(x_t - \sqrt{1-\bar{\alpha}_t}\epsilon_{\theta}(x_t, t)\Bigr)
\end{equation}
For text-to-image synthesis, generation typically occurs in a learned latent space \cite{rombach2022high}.
In this setting, the denoising model is usually implemented using a deep neural network composed of \(L\) transformer-based layers. 
At the \(l\)-th transformer layer at timestep \(t\), the input latent representation is denoted by $\mathbf{x}_{t}^{(l)} \in \mathbb{R}^{B \times H \times W \times C},$ where \(B\) is the batch size, and \(H, W, C\) are the spatial and embedding dimensions, respectively. 



\paragraph{Diffusion ODEs}
Sampling with diffusion models can be reformulated as solving a reverse-time ordinary differential equation (ODE).
In particular, previous works \cite{song2019generative, song2020score} defined the Probability-flow ODE that characterizes the continuous-time evolution of samples:
\begin{equation}
\left. \frac{\text{d}x}{\text{d}t} \right|_{x=x_t}= f(t)x_t + \frac{g^2(t)}{2\sigma_{t}}\epsilon_{\theta}(x_{t},t),
\label{eq:pfode}
\end{equation}
where $f(t) = \frac{\text{d}}{\text{d}t} \log \sqrt{\bar{\alpha}_{t}}$, $g^2(t) = \frac{\text{d}\sigma_t}{\text{d}t} - 2 \frac{\text{d}}{\text{d}t} \left(\log\sqrt{\bar{\alpha}_t}\right) \sigma_t$, and $\sigma_t = \sqrt{1 - \bar{\alpha}_t}$. 
Here, time is assumed to be continuous with $t \sim \mathcal{U}([0, 1])$. Recent works explored rectified flows \cite{liu2022flow, albergo2022building, lipmanflow}, where the reverse transformation is achieved with a learned vector field $u_\theta(x_t, t)$ predicted by the denoising model:
\begin{equation}
    \left. \frac{\text{d}x}{\text{d}t}\right |_{x=x_t} = u_\theta(x_t, t). \label{formula::flowmatchin}
\end{equation}
Without loss of generality, we mark $\left. \frac{\text{d}x}{\text{d}t} \right|_{x=x_t}$ as $y(x_t, t)$.
For brevity, we further denote \(y(x_t, t)\) as \(y_t\), \(\epsilon_\theta(x_t, t)\) as \(\epsilon_t\), and \(u_\theta(x_t, t)\) as \(u_t\)


\paragraph{Step-wise Sparsity} accelerates the diffusion model by adaptively reducing the noise prediction steps.
The recent work \cite{ye2024training} leverages a third-order difference of the latent feature map $x_t$ to identify temporal redundancy in the sampling process.
Let \(\Delta^{(1)} x_t = x_t - x_{t+1}\) as backward finite difference.
At step \(t\), given the noise $\epsilon_t$ and a threshold \(\tau\), if:
\begin{equation}
\frac{(\|\Delta^{(1)} x_{t+2}\| + \|\Delta^{(1)} x_{t}\|) / 2  - \|\Delta^{(1)} x_{t+1}\|}{\|\Delta^{(1)} x_{t+1}\|} \leq \tau,
\label{eq:criterion}
\end{equation}
then the denoising model is bypassed for the subsequent timestep $t - 1$, and $\epsilon_{t-1}$ is reused by $\epsilon_t$. 

\paragraph{Token-wise Sparsity} accelerates diffusion models with high representation granularity.  
The token indices of the input sequence in a transformer layer are partitioned into \(\mathcal{I}_{\text{reduce}}\) and \(\mathcal{I}_{\text{fix}}\), aiming to maximize \(|\mathcal{I}_{\text{reduce}}|\) while maintaining acceptable image quality.  
An index mapping \(I:\{1,\dots,N\}\rightarrow\{1,\dots,N'\}\) is defined, where \(N'=|\mathcal{I}_{\text{fix}}|\).  
A standard token pruning \cite{bolya2023token, kim2024token} process then discards \(\mathcal{I}_{\text{reduce}}\) and retains only \(\mathcal{I}_{\text{fix}}\):
\begin{equation}
\tilde{\mathbf{x}}[j] 
= \mathbf{x}[i'], 
\quad j = 1, \dots, N',
\end{equation}
where \(I(i')=j\) and \(i' \in \mathcal{I}_{\text{fix}}\).  
After self-attention \(A(\cdot)\), the feature map is reconstructed by:
\begin{equation}
\hat{\mathbf{x}}[i] 
= A(\tilde{\mathbf{x}})\bigl[I(i)\bigr], 
\quad i = 1,\dots,N.
\end{equation}
Removing more tokens often degrades generation quality.  
Our analysis in Appendix~\ref{sec:appendix_section} shows that token merging behaves as a low-pass filter, motivating our choice to build upon token pruning.


\subsection{Overall Paradigm}
Figure \ref{fig:pipeline} illustrates the overall paradigm of SADA. We leverage the precise gradient from the sampling trajectory to measure the denoising stability,
as described in Sections~\ref{sec:modeling}.
This Boolean measure serves as an overall criterion for our acceleration process. Specifically, when Criterion~\ref{thm:criteria} returns \texttt{True}, we apply step-wise cache-assisted pruning (refer to Section~\ref{sec:am_method}) with dual approximation scheme \{\texttt{step-wise, multistep-wise}\}. Conversely, when Criterion~\ref{thm:criteria} returns \texttt{False}, we apply token-wise cache-assisted pruning (refer to Section~\ref{sec:fl_cache}).

\subsection{Modeling the Stability of the Denoising Process}
\label{sec:modeling}

We formulate the acceleration of the denoising process as a stability prediction problem. 
Given a pre-trained diffusion architecture with noise prediction $\epsilon_t$ at timestep $t$, we seek a dynamic criterion for step-wise and token-wise cache‐assisted pruning (i.e., optimal sparsity allocation) that reduces computation at timestep $t-1$ without degrading sample fidelity.

The criterion for acceleration should match the specific approximation method that mitigates the loss resulting from step-wise pruning.
AdaptiveDiffusion~\cite{ye2024training} directly caches and reuses the noise across steps. 
However, approximating $\hat\epsilon_{t-1} = \epsilon_t$ introduces a mismatch between the sample state \(x_{t-1}\) and its corresponding noise, causing error to accumulate.
A more principled approach should incorporate \emph{historical information} to correct this approximation.

Concretely, we propose a \emph{third‐order extrapolation} of the sample state at \(t-1\), which implicitly carries historical noise along the ODE trajectory.
A straightforward third‐order backward finite‐difference baseline can be written as $\hat{x}_{t-1} = 3x_{t} - 3x_{t+1} + x_{t+2}$. 
This yields a correction term, $x_{t-1} - \hat{x}_{t-1} = \Delta^{(3)} x_{t-1}.$
Moreover, \(\hat{x}_{t-1}\) is theoretically bounded by Theorem~\ref{thm:backward-extrap_1}, which guarantees its stability under small \(\Delta t\).

\begin{theorem}
\label{thm:backward-extrap_1}
Let $f \in C^k[a, b]$ be a smooth function and let $x_0 := x$, with equally spaced grid points $x_i := x + ih$ for $i = 0, 1, \dots, k-1$.  
Define $P_{k-1}(t)$ as the degree-$(k-1)$ Lagrange interpolant of $f$ at $\{x_i\}_{i=0}^{k-1}$.

Then, the extrapolated value at $x - h$ satisfies:
\begin{align}
f(x - h) = \sum_{i=0}^{k-1} \alpha_i f(x_i) + R_k(h),
\end{align}
where the weights are given by:
\begin{align}
\alpha_i = (-1)^i \binom{k}{i+1}, \quad i = 0, 1, \dots, k-1.
\end{align}
The error bound of the remainder term is:
\begin{align}
    R_k(h) = \mathcal{O}(h^k).
\end{align}

\end{theorem}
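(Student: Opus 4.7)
The plan is to identify the coefficients $\alpha_i$ by direct evaluation of the Lagrange interpolant at $t = x-h$, and then bound the remainder via the classical Lagrange interpolation error. Writing $P_{k-1}$ in Lagrange form gives $\alpha_i = L_i(x-h)$, where $L_i$ is the $i$-th Lagrange basis for the nodes $\{x_0, \dots, x_{k-1}\}$. Using the equal spacing $x_j = x + jh$, the identities $(x-h) - x_j = -(j+1)h$ and $x_i - x_j = (i-j)h$ cause all factors of $h$ to cancel, so that
\begin{equation*}
L_i(x-h) = \prod_{\substack{j=0 \\ j\neq i}}^{k-1} \frac{-(j+1)}{i-j}
\end{equation*}
is a pure rational number depending only on $i$ and $k$.

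Next, I would simplify this product. The $k-1$ minus signs in the numerator contribute $(-1)^{k-1}$, the remaining numerator collapses to $k!/(i+1)$, and the denominator splits as $\prod_{j<i}(i-j)\cdot\prod_{j>i}(i-j) = i!\cdot(-1)^{k-1-i}(k-1-i)!$. Combining the two sign contributions yields $(-1)^i$, and the factorials rearrange into $k!/[(i+1)!(k-1-i)!] = \binom{k}{i+1}$, giving $\alpha_i = (-1)^i \binom{k}{i+1}$ as claimed.

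For the remainder, I would invoke the standard interpolation error formula: there exists $\xi$ in the smallest interval containing $\{x-h, x_0, \dots, x_{k-1}\}$ such that
\begin{equation*}
R_k(h) = \frac{f^{(k)}(\xi)}{k!} \prod_{j=0}^{k-1}\bigl((x-h) - x_j\bigr) = (-h)^k\, f^{(k)}(\xi),
\end{equation*}
since $\prod_{j=0}^{k-1}\bigl(-(j+1)h\bigr) = (-h)^k\, k!$. Because $f \in C^k$ on the compact interval $[a,b]$, $f^{(k)}$ is bounded there, which yields $R_k(h) = \mathcal{O}(h^k)$ as $h \to 0$.

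The main obstacle is the sign and factorial bookkeeping in the coefficient simplification: tracking how the $(-1)^{k-1-i}$ from the $j>i$ portion of the denominator merges with the $(-1)^{k-1}$ from the numerator to give exactly $(-1)^i$, and recognizing the resulting factorial ratio as $\binom{k}{i+1}$ rather than a shifted variant such as $\binom{k-1}{i}$. Once those identifications are correct, the remainder bound is essentially a textbook one-liner, and a quick sanity check at $k=3$ (recovering $3, -3, 1$, matching the third-order extrapolation $\hat{x}_{t-1} = 3x_t - 3x_{t+1} + x_{t+2}$ used in Section~\ref{sec:modeling}) confirms the formula.
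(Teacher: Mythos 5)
Your proposal is correct and follows essentially the same route as the paper's proof: evaluate the Lagrange basis functions at $t = x-h$, cancel the powers of $h$ from the equal spacing, simplify the signs and factorials to obtain $(-1)^i\binom{k}{i+1}$, and bound the standard Lagrange remainder by $\mathcal{O}(h^k)$. Your coefficient bookkeeping and the $k=3$ sanity check are both accurate, and in fact spell out the simplification step in more detail than the paper does.
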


To derive our stability criterion, we now present the following supplemental theorems:

\begin{theorem}
\label{thm:global-average}
The expected value of $x_t$ over the joint distribution of $x_0$, $\epsilon$, and timestep $t$ satisfies: $\mathbb{E}_{x_0, \epsilon, t}[x_t] = \sqrt{\bar{\alpha}_t} \cdot \mathbb{E}_{x_0}[x_0]$.
\end{theorem}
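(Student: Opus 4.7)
The plan is to invoke the closed‐form reparameterization of the forward diffusion process introduced in the preliminaries and reduce the statement to a one‐line linearity‐of‐expectation argument. Recall that under the variance schedule $\{\beta_t\}$ with $\bar{\alpha}_t=\prod_{i=1}^{t}\alpha_i$, the forward marginal admits the explicit representation
\begin{equation}
x_t \;=\; \sqrt{\bar{\alpha}_t}\,x_0 \;+\; \sqrt{1-\bar{\alpha}_t}\,\epsilon, \qquad \epsilon \sim \mathcal{N}(0,\mathbf{I}),
\end{equation}
with $\epsilon$ independent of $x_0$ (and, in the standard training setup, of $t$). This is the only fact about the forward process that the proof needs.

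\textbf{Main steps.} First, I would apply linearity of expectation to the reparameterization, which gives $\mathbb{E}[x_t] = \sqrt{\bar{\alpha}_t}\,\mathbb{E}[x_0] + \sqrt{1-\bar{\alpha}_t}\,\mathbb{E}[\epsilon]$. Second, I would invoke independence together with $\mathbb{E}[\epsilon]=0$ to eliminate the noise term, leaving $\mathbb{E}[x_t] = \sqrt{\bar{\alpha}_t}\,\mathbb{E}_{x_0}[x_0]$. Third, I would address the outer expectation over $t$: since $\sqrt{\bar{\alpha}_t}$ still appears on the right‐hand side, the natural reading is that the identity holds \emph{pointwise in $t$} (i.e., conditional on $t$), so the $t$ in the subscript refers to specifying the joint law of $(x_0,\epsilon,t)\mapsto x_t$ rather than to integrating out $t$ on the right. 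I would state this reading explicitly and, if one instead interprets $t$ as integrated, note that both sides simply acquire an $\mathbb{E}_t[\sqrt{\bar{\alpha}_t}]$ factor so that the equality remains intact.

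\textbf{Anticipated difficulty.} There is essentially no technical obstacle: the result follows immediately from the Gaussian reparameterization plus linearity. The only point deserving care is a brief clarification of the notation $\mathbb{E}_{x_0,\epsilon,t}[\,\cdot\,]$ versus the $t$‐dependence of the right‐hand side, which I would handle in a single sentence to avoid any ambiguity before concluding the proof.
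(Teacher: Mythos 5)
Your proposal is correct and follows essentially the same route as the paper's proof: apply the reparameterization $x_t = \sqrt{\bar{\alpha}_t}\,x_0 + \sqrt{1-\bar{\alpha}_t}\,\epsilon$, use linearity of expectation and $\mathbb{E}[\epsilon]=0$, and conclude. If anything, you are slightly more careful than the paper, which writes $\mathbb{E}_{t}\bigl[\sqrt{\bar{\alpha}_t}\cdot\mathbb{E}_{x_0}[x_0]\bigr] = \sqrt{\bar{\alpha}_t}\cdot\mathbb{E}_{x_0}[x_0]$ without flagging the pointwise-in-$t$ reading that your last step makes explicit.
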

(i) From Theorem~\ref{thm:global-average}, we know that the trajectory $\{x_t\}$ is continuous in expectation. Consequently, the empirical average $x_t$ also exhibits continuity by the law of large numbers~\cite{feller1971introduction}.
\begin{theorem}\label{thm:network-consistency}
Let the network $\epsilon_\theta(x, t)$ be trained with the standard
mean-squared error (MSE) objective:
\begin{align}
\mathcal L(\theta) =
  \mathbb E_{x_0, \epsilon, t}\left[
     \|\epsilon - \epsilon_\theta(x_t, t)\|^2
  \right],
\end{align}
Suppose $\theta^\star$ minimizes $\mathcal L$, and the training is sufficiently converged. Then, following\textbf{~\ref{assump:net-lipschitz}} that $\epsilon_\theta(x_{t}, t)$ is Lipschitz in $x_{t}$ and $t$,  
we have the following consistency property for sampling-time inputs $\hat{x}_t$:
\begin{align}
\mathbb{E}_{x_{0},\epsilon, t}[\epsilon - \epsilon_{\theta^\star}(\hat{x}_t, t)] \to 0
\quad \text{as } \|\hat{x}_t - x_t\|\to 0.
\end{align}
\end{theorem}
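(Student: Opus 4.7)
The plan is to exploit the classical fact that the MSE-optimal predictor is the conditional expectation, and then transport that zero-mean identity from $x_t$ to $\hat x_t$ using Lipschitz continuity.

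First I would establish the \emph{exact} identity at the clean input. Since $\theta^\star$ minimizes the MSE objective $\mathcal L(\theta) = \mathbb E_{x_0,\epsilon,t}\|\epsilon - \epsilon_\theta(x_t,t)\|^2$ over a sufficiently expressive class (the assumption of ``sufficiently converged'' training), a standard pointwise argument shows that the minimizer equals the Bayes-optimal regressor,
\begin{equation}
\epsilon_{\theta^\star}(x_t, t) \;=\; \mathbb E\!\left[\epsilon \,\big|\, x_t, t\right].
\end{equation}
Applying the tower property then gives
\begin{equation}
\mathbb E_{x_0,\epsilon,t}\!\left[\epsilon - \epsilon_{\theta^\star}(x_t, t)\right]
= \mathbb E_{t}\!\left[\mathbb E_{x_t}\!\left[\mathbb E[\epsilon\mid x_t, t] - \epsilon_{\theta^\star}(x_t, t)\right]\right] = 0,
\end{equation}
so the desired consistency holds exactly at the training-time input.

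Next I would transfer this to the perturbed input $\hat x_t$ via the decomposition
\begin{equation}
\epsilon - \epsilon_{\theta^\star}(\hat x_t, t)
= \bigl(\epsilon - \epsilon_{\theta^\star}(x_t, t)\bigr) + \bigl(\epsilon_{\theta^\star}(x_t, t) - \epsilon_{\theta^\star}(\hat x_t, t)\bigr).
\end{equation}
Taking expectations, the first bracket contributes $0$ by the previous step. For the second bracket, Assumption~\ref{assump:net-lipschitz} gives a Lipschitz constant $L$ in the $x$-argument, so
\begin{equation}
\left\|\mathbb E_{x_0,\epsilon,t}\!\left[\epsilon_{\theta^\star}(x_t, t) - \epsilon_{\theta^\star}(\hat x_t, t)\right]\right\|
\;\le\; L\,\mathbb E_{x_0,\epsilon,t}\!\left[\|\hat x_t - x_t\|\right].
\end{equation}
Combining the two pieces by the triangle inequality yields the claim: as $\|\hat x_t - x_t\|\to 0$ (uniformly or in mean), the right-hand side vanishes.

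The main obstacle I anticipate is the gap between the idealized statement ``$\theta^\star$ minimizes $\mathcal L$'' and the practically meaningful claim about a \emph{trained} network. If one interprets the minimizer over all measurable functions, then $\epsilon_{\theta^\star}(x_t,t) = \mathbb E[\epsilon\mid x_t,t]$ is immediate and the proof proceeds as above; if instead we restrict to a parametric class, we only obtain $\mathbb E[\epsilon - \epsilon_{\theta^\star}(x_t,t)]\approx 0$ up to an approximation error that should be absorbed into an additional ``$+o(1)$'' term. A secondary technical point is justifying the interchange of limit and expectation in the Lipschitz bound, which follows from dominated convergence provided $\|\hat x_t - x_t\|$ is uniformly integrable (guaranteed, e.g., if the perturbation has bounded second moment). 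Neither issue is deep, but both should be flagged explicitly so that the conclusion is rigorous rather than heuristic.
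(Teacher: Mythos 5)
Your proof is correct and follows essentially the same route as the paper's: identify $\epsilon_{\theta^\star}(x_t,t)$ with the conditional expectation $\mathbb E[\epsilon\mid x_t,t]$ so the bias vanishes at the true input, then transfer to $\hat x_t$ via the Lipschitz assumption and the triangle inequality. The technical caveats you flag (minimization over a parametric class versus all measurable functions, and the interchange of limit and expectation) are real gaps that the paper's proof also leaves implicit, so your version is if anything slightly more careful.
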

(ii) From Theorem \ref{thm:network-consistency}, a well-trained denoiser permits us to treat the sampling trajectory as a continuous process, preserving structural consistency.

We therefore assume that, in a stable regime, the sign of consecutive third-order differences \textit{remains consistent}:
$\operatorname{sign}\bigl(\Delta^{(3)} x_t\bigr) \;=\;
  \operatorname{sign}\bigl(\Delta^{(3)} x_{t-1}\bigr).$ 
Consequently, if the extrapolation error \(x_{t-1} - \hat{x}_{t-1}\) is aligned with the true curvature \(\Delta^{(3)}x_{t-1}\), then \(\hat{x}_{t-1}\) serves as a directionally accurate approximation. 
To incorporate precise gradient information, we leverage the always-hold identity $\Delta^{(2)} y_t \cdot \Delta^{(3)} x_t < 0$ by construction.
Combining the sign measure with the identity, we substitute $\Delta^{(3)}x_t$ with $\Delta^{(3)}x_{t-1} = x_{t-1} - \hat{x}_{t-1}$ and yield the following criterion:
\begin{criteria}
\label{thm:criteria}
A timestep \(t\) is considered stable and eligible for acceleration if the extrapolation error is anti-aligned with the local curvature of velocity:
\begin{equation}
(x_{t-1} - \hat{x}_{t-1}) \cdot \Delta^{(2)} y_t < 0.
\end{equation}
\end{criteria}

This stability measure ensures that the extrapolated state \(\hat{x}_{t-1}\) lies in the correct direction with respect to curvature correction. 

\begin{figure*}[tp]
    \centering
  \begin{subfigure}[b]{0.64\textwidth}
    \includegraphics[width=\linewidth,clip]{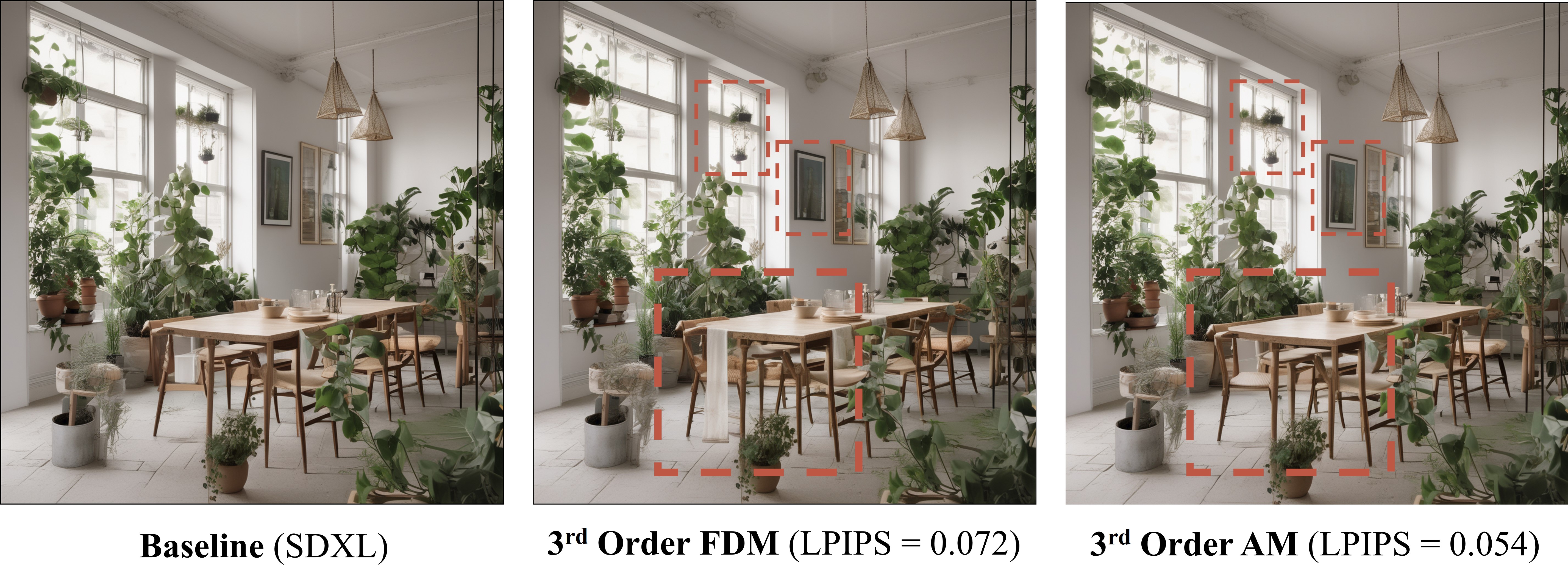}
  \end{subfigure}%
  \begin{subfigure}[b]{0.34\textwidth}
    \includegraphics[width=\linewidth,clip]{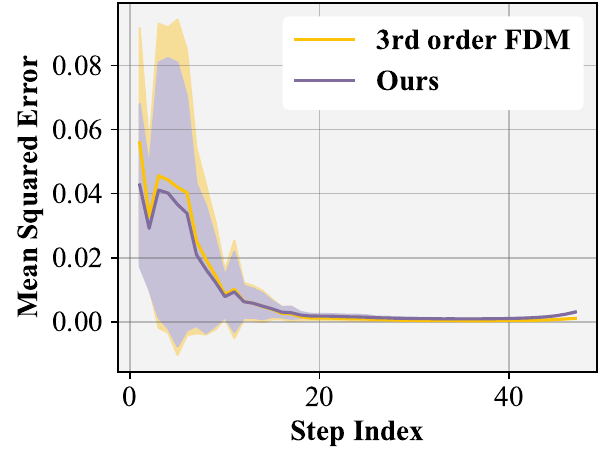}
  \end{subfigure}\\
    \caption{Comparison of $x_t$-approximation strategies. 
    Left: Step-wise pruning result of the third-order finite difference method and third-order Adams-Moulton method.
    Right: Mean Squared Error comparison between two strategies, averaged over 50 randomly selected prompts. Shaded regions indicate the standard deviation at each step.}
    \label{fig:mse}
\end{figure*}

\begin{figure}
    \centering
    \includegraphics[width=1\linewidth]{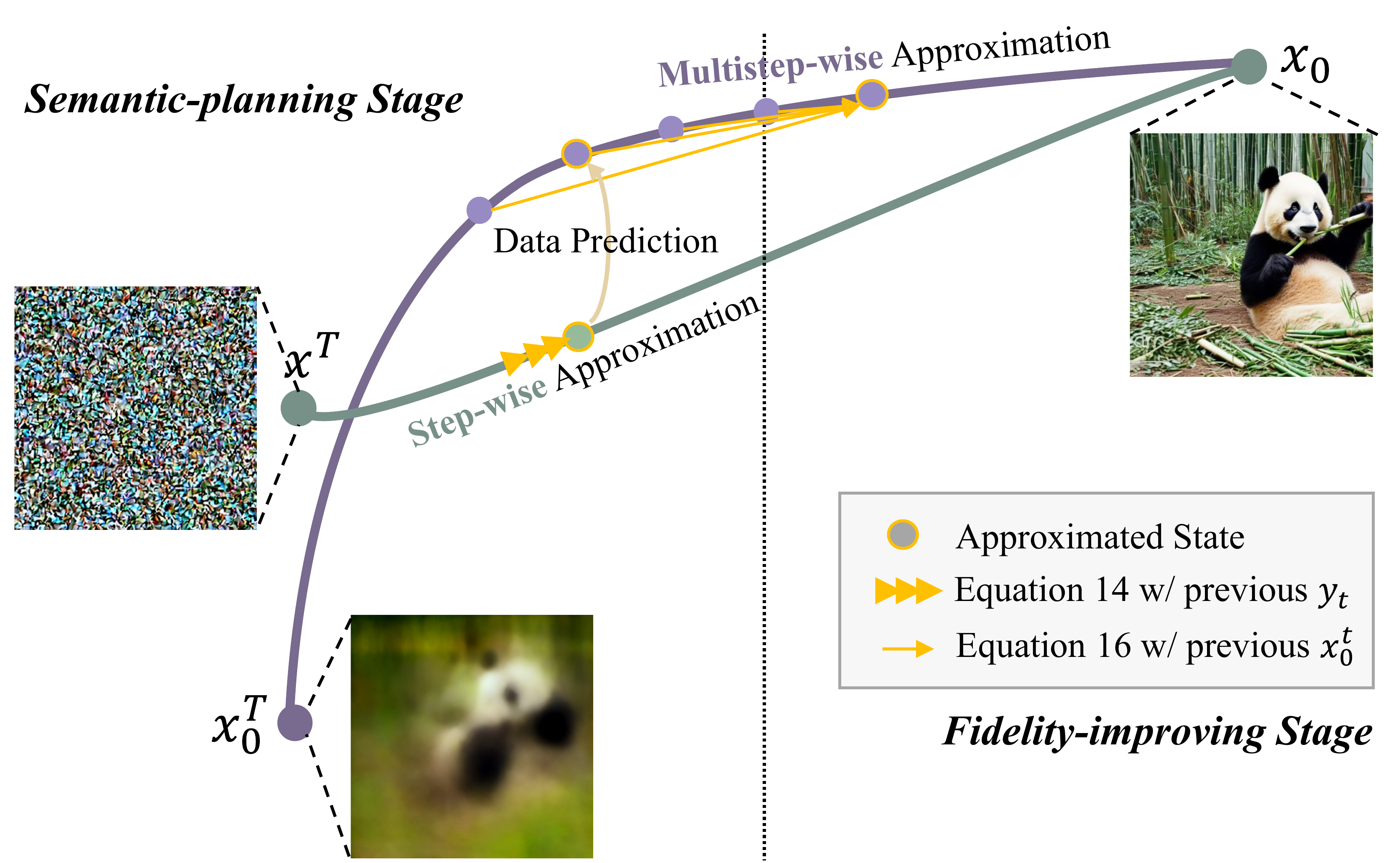}
    \caption{Illustration of the proposed dual approximation scheme on the $x_0^t$ and $x_t$ trajectory.} 
    \label{fig:trajectory}
\end{figure}

\subsection{Step-wise Cache-Assisted Pruning}
\label{sec:am_method}
This section proposes two complementary approximation schemes for Step‐wise Cache‐Assisted Pruning: (\texttt{step‐wise}) and (\texttt{multistep‐wise}). Either approximation scheme produces a clean‐sample estimate \(\hat{x}_{0}^{\,t}\), which is then fed into advanced samplers (e.g., DPM‐Solver++ or EDM \cite{karras2022elucidating, lu2022dpm}). This unified framework aligns the \(x_{0}^{t}\) and \(x_{t}\) trajectories (Fig.~\ref{fig:trajectory}). Below, we detail the design choices for each approximation.

\paragraph{Step-wise Approximation}
As noted in Section \ref{sec:modeling}, a simple baseline for approximating $x_{t-1}$ is a third-order backward finite difference.
Empirical experiments demonstrate low reconstruction errors, as shown in Figure \ref{fig:mse}. 
Note that we reuse the noise prediction at step $t$, formulated as $\hat\epsilon_{t-1} \leftarrow \epsilon_t$, under this scheme.
However, since we have exact derivative information $y_t$ (See Eq. \ref{eq:pfode} and \ref{formula::flowmatchin})
we adopt a third‐order \textbf{Adams–Moulton} method \cite{iserles2009first} along the ODE trajectory:

\begin{theorem}
\label{thm:third-order-estimator-1}
Using the second‐ and third‐order Adams–Moulton method, we define the estimator:
\begin{align}
\hat{x}_{t-1} := x_{t}-\frac{5\Delta t}{6}y_{t}-\frac{5\Delta t}{6}y_{t+1}+\frac{2\Delta t}{3}y_{t+2},
\end{align}
whose local truncation error satisfies: $\hat{x}_{t-1} - x_{t-1} = \mathcal{O}(\Delta t^2).$
\end{theorem}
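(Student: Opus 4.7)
The plan is to establish the local-truncation bound by a direct Taylor-expansion comparison. Both $\hat x_{t-1}$ and $x_{t-1}$ admit power-series expansions around the current time $t$, so it suffices to verify that the two series agree through the first-order term in $\Delta t$; any residual beginning at order $\Delta t^3$ is then automatically $\mathcal{O}(\Delta t^2)$ as claimed.

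First I would expand the two sampled derivatives about $t$,
\[
y_{t+k} \;=\; y_t + k\Delta t\, y'_t + \tfrac{1}{2}(k\Delta t)^2 y''_t + \mathcal{O}(\Delta t^3), \quad k = 1, 2,
\]
where $y'_t := \mathrm{d}y/\mathrm{d}t$ along the ODE trajectory of Eq.~(\ref{eq:pfode}) or Eq.~(\ref{formula::flowmatchin}). Substituting into the definition of $\hat x_{t-1}$ and collecting terms, the weights $-5/6,-5/6,2/3$ (which sum to $-1$) produce a coefficient $-\Delta t$ in front of $y_t$ and a coefficient $\tfrac{1}{2}\Delta t^2$ in front of $y'_t$, with an uncanceled contribution starting at $\Delta t^3\, y''_t$. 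Second, applying the chain rule to $\mathrm{d}x/\mathrm{d}t = y(x,t)$ yields
\[
x_{t-1} \;=\; x_t - \Delta t\, y_t + \tfrac{1}{2}\Delta t^2\, y'_t - \tfrac{1}{6}\Delta t^3\, y''_t + \mathcal{O}(\Delta t^4).
\]
Subtracting, the $y_t$ and $y'_t$ contributions cancel exactly, leaving $\hat x_{t-1} - x_{t-1} = c\,\Delta t^3\, y''_t + \mathcal{O}(\Delta t^4)$ for an explicit constant $c$, which is in particular $\mathcal{O}(\Delta t^2)$.

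The step I expect to be the main obstacle is not the arithmetic but the bookkeeping that ties the estimator back to the named ``second- and third-order Adams--Moulton'' construction. Concretely, I would start from the AM3 corrector on the step $t \mapsto t-1$, which implicitly requires $y_{t-1}$, and eliminate $y_{t-1}$ by a second-order (AM2-consistent) extrapolation expressed through $\{y_t, y_{t+1}, y_{t+2}\}$; the algebra must reproduce the coefficients $-5/6,-5/6,2/3$ for the formula to be consistent with its declared origin. With this identification in hand, the Taylor-matching argument above delivers the stated bound, and the Lipschitz assumption on $y$ in $x$ (in the spirit of Assumption~\ref{assump:net-lipschitz}) allows the residual to be written with a constant uniform along the sampling trajectory.
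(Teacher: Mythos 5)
Your core argument is correct but takes a genuinely different route from the paper. You verify the error bound \emph{a posteriori} by Taylor-expanding both $\hat{x}_{t-1}$ and $x_{t-1}$ about $t$ and checking cancellation of the $y_t$ and $y'_t$ terms (the weights sum to $-1$ and give $\tfrac12\Delta t^2$ on $y'_t$, as you say), which in fact yields the sharper statement $\hat{x}_{t-1}-x_{t-1}=\tfrac{13}{12}\Delta t^3\,y''_t+\mathcal{O}(\Delta t^4)=\mathcal{O}(\Delta t^3)$, a fortiori $\mathcal{O}(\Delta t^2)$. The paper instead argues \emph{constructively}: it starts from the third-order backward-difference extrapolation $\tilde{x}_{t-1}=3x_t-3x_{t+1}+x_{t+2}=x_t+2(x_t-x_{t+1})-(x_{t+1}-x_{t+2})$, replaces the already-traversed increments by Adams--Moulton quadratures, $x_t-x_{t+1}=-\tfrac{\Delta t}{12}(5y_t+8y_{t+1}-y_{t+2})+\mathcal{O}(\Delta t^3)$ (third order) and $x_{t+1}-x_{t+2}=-\tfrac{\Delta t}{2}(y_{t+1}+y_{t+2})+\mathcal{O}(\Delta t^2)$ (trapezoidal), and adds the extrapolation error from its Theorem~\ref{thm:backward-extrap} to the substitution errors. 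The paper's route explains where the coefficients $-\tfrac56,-\tfrac56,\tfrac23$ come from and reuses its earlier lemmas; yours is more elementary, self-contained, and sharper, at the cost of treating the coefficients as given.

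One concrete caution about the part you flagged as the ``main obstacle'': your proposed reconciliation --- start from the AM3 corrector over $[t-1,t]$, which involves $y_{t-1}$, and eliminate $y_{t-1}$ by extrapolation from $\{y_t,y_{t+1},y_{t+2}\}$ --- does \emph{not} reproduce the stated weights (e.g.\ with $y_{t-1}\approx 3y_t-3y_{t+1}+y_{t+2}$ one gets $x_t-\tfrac{\Delta t}{12}(23y_t-16y_{t+1}+5y_{t+2})$). The ``second- and third-order Adams--Moulton'' in the theorem's name refers to the two quadratures applied to the \emph{known} increments $x_t-x_{t+1}$ and $x_{t+1}-x_{t+2}$ inside the finite-difference extrapolation, not to an implicit corrector on the new step. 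Since the estimator is a definition and your Taylor argument proves the bound without this bookkeeping, this does not invalidate your proof; also note that matching the $\tfrac12\Delta t^2 y'_t$ term (and a fortiori exhibiting the $\Delta t^3 y''_t$ residual) requires $y$ to be $C^1$ (resp.\ $C^2$) along the trajectory, slightly more than the Lipschitz condition of Assumption~\ref{assump:net-lipschitz}, which is an implicit smoothness requirement the paper's own quadrature error bounds share.
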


The full derivation and the corresponding error bound are provided in 
Proposition~\ref{pro:high-order}, Theorems~\ref{thm:backward-extrap}, \ref{thm:adams-moulton}, and \ref{thm:third-order-estimator}
in the Appendix~\ref{Proofs}. To quantify the benefit of these precise updates, we measure per‐step reconstruction error on 50 randomly sampled MS-COCO prompts \cite{lin2014microsoft}.  
Our Adams–Moulton scheme results in lower mean error and smaller standard deviation, compared to third-order finite difference, as illustrated in Figure \ref{fig:mse}.

Given that $x_0^{t}$ captures structural information and serves as the initial input to all schedulers, accurately reconstructing $x_0^t$ is critical. Theorem~\ref{thm:x0t-error-bound-1} establishes an upper bound on the reconstruction error at timestep $t$, based on the third-order estimator in Theorem~\ref{thm:third-order-estimator-1} and incorporating the effect of noise reuse.

\begin{theorem}
\label{thm:x0t-error-bound-1}

Let $\hat{x}_0^t$ denote the reconstruction of $x_0$ at time $t$. Then, the final reconstruction $\hat{x}_0^t$ satisfies the following error bound:
\begin{align}   
   \|\hat{x}_0^t - x_0^{t}\| = \mathcal{O}(\Delta t) + \mathcal{O}(\Delta x_t).
\end{align}
\end{theorem}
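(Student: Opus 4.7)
The plan is to decompose $\hat{x}_0^t - x_0^t$ into a sample-state contribution and a noise-reuse contribution, and then invoke previously established results to control each piece. I would begin from the data-prediction identity
\[
x_0^t = \frac{1}{\sqrt{\bar{\alpha}_t}}\bigl(x_t - \sqrt{1-\bar{\alpha}_t}\,\epsilon_\theta(x_t,t)\bigr),
\]
and write its hatted analogue with $x_t$ replaced by the Adams--Moulton estimate $\hat{x}_t$ and with $\epsilon_\theta(x_t,t)$ replaced by the cached noise $\hat{\epsilon}_t$ inherited from the nearest non-skipped timestep. Subtracting and applying the triangle inequality gives
\[
\|\hat{x}_0^t - x_0^t\| \leq \frac{1}{\sqrt{\bar{\alpha}_t}}\|\hat{x}_t - x_t\| + \frac{\sqrt{1-\bar{\alpha}_t}}{\sqrt{\bar{\alpha}_t}}\|\hat{\epsilon}_t - \epsilon_\theta(x_t,t)\|,
\]
and I would note that both prefactors are uniformly bounded on the practical portion of the noise schedule.

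Next, the sample term is controlled directly by Theorem~\ref{thm:third-order-estimator-1}, which yields $\|\hat{x}_t - x_t\| = \mathcal{O}(\Delta t^2)$; this is absorbed into the $\mathcal{O}(\Delta x_t)$ term since $\Delta x_t := \hat{x}_t - x_t$ by construction. For the noise term, since $\hat{\epsilon}_t$ is the cached prediction evaluated at a neighboring state/time pair $(x_{t'}, t')$ with $|t'-t| = \mathcal{O}(\Delta t)$, Assumption~\ref{assump:net-lipschitz} gives
\[
\|\hat{\epsilon}_t - \epsilon_\theta(x_t,t)\| \leq L_x\,\|x_{t'} - x_t\| + L_t\,|t'-t|.
\]
A standard trajectory-smoothness argument using boundedness of the velocity $y_t$ (Eq.~\ref{eq:pfode}, \ref{formula::flowmatchin}) together with the continuity guaranteed by Theorem~\ref{thm:global-average} yields $\|x_{t'} - x_t\| = \mathcal{O}(\Delta t)$, and combining the two pieces produces the advertised bound $\mathcal{O}(\Delta t) + \mathcal{O}(\Delta x_t)$.

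The main obstacle I anticipate is keeping the two error scales distinct rather than silently collapsing one into the other. In particular, the Lipschitz step only delivers an \emph{input-perturbation}-sized bound on the noise error when the denoiser is sufficiently regular---this is precisely where Theorem~\ref{thm:network-consistency} is needed to rule out pathological amplification of $\epsilon_\theta$-outputs under small input shifts, so that the two $\mathcal{O}$-terms remain additive rather than multiplicative. A secondary subtlety is bookkeeping the $\sqrt{\bar{\alpha}_t}$ and $\sqrt{1-\bar{\alpha}_t}$ prefactors so that the hidden constants do not blow up at the endpoints of the schedule; this reduces to restricting $t$ to the range used by the sampler, which is automatic in the acceleration setting under consideration.
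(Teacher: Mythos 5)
Your proposal matches the paper's proof: both decompose $\hat{x}_0^t - x_0^t$ through the linear data-prediction formula into a $C_1\|\hat{x}_t - x_t\|$ term (controlled by the Adams--Moulton bound of Theorem~\ref{thm:third-order-estimator-1}) and a $C_2\|\hat{\epsilon}_t - \epsilon_t\|$ term (controlled by the Lipschitz assumption~\ref{assump:net-lipschitz}), with schedule-dependent constants absorbed into the big-$\mathcal{O}$. The paper's argument is exactly this one-liner; your additional appeals to Theorem~\ref{thm:network-consistency} and to bounded velocity are superfluous but harmless.
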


This theorem characterizes the reconstruction error as a first-order term in both the scheduler resolution $\Delta t$ and the variation $\Delta x_t$. The detailed proof can be found in Appendix~\ref{Proofs}.

\paragraph{Multistep-wise Approximation}
Prior work \cite{liu2025faster} empirically divides the denoising trajectory into a \emph{semantic‐planning} stage and a subsequent \emph{fidelity‐improving} stage, corresponding to regions where data \(x_0^t\) is inherently stable (see Figure~\ref{fig:trajectory}). 
In these stable regions, one can safely use larger effective step sizes by compensating with higher‐order interpolation.
Building on this insight, we implement a uniform step-wise pruning strategy with Lagrange interpolation, once the trajectory enters the stable regime. 

For example, consider a 50‐step process. To achieve a step‐wise pruning interval of 4 after stabilization 
(i.e., compute every 4th step fully and interpolate the skipped steps via Lagrange), 
we store \(x^t_0\) every 4 steps before stabilization. Their indices define the fixed‐size set \(I\), 
which is a rolling buffer to limit memory usage. For any skipped \(t\):
\begin{theorem}
\label{thm:lagrange-cache}
Let $I = \{0, 1, \dots, k\}$ be $k+1$ distinct indexes with known cached values $\{x_0^{t_i}\}_{i\in I}$. For any skipped timestep $t \notin \{t_{i}\}_{i\in I}$, define the interpolated reconstruction as:
\begin{align}
\hat{x}_0^t := \sum_{i \in I} \left( \prod_{j \in I \setminus \{i\}} \frac{t - t_j}{t_i - t_j} \right) x_0^{t_i}. \label{for:la}
\end{align}
Then, under the assumption that $x_0^\tau$ is $(k+1)$-times continuously differentiable over $\tau \in [t_{\min}, t_{\max}]$, the interpolation error satisfies:
\begin{align}
\|\hat{x}_0^t - x_0^t\| = \mathcal{O}(h^{k+1}),
\end{align}
where $h$ is the maximum step spacing among $\{t_i\}$.
\end{theorem}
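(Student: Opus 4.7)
}
The plan is to reduce the claim to the classical Lagrange interpolation remainder theorem, applied componentwise to the vector-valued curve $\tau \mapsto x_0^\tau$. First I would fix an arbitrary coordinate of $x_0^\tau$ and treat it as a scalar function $f \in C^{k+1}[t_{\min}, t_{\max}]$ of the continuous time variable $\tau$. The interpolated value defined in equation~\eqref{for:la} is exactly the unique degree-$k$ polynomial $P_k$ agreeing with $f$ at the $k+1$ distinct nodes $\{t_i\}_{i\in I}$, so I can invoke the standard remainder identity: there exists $\xi \in (t_{\min}, t_{\max})$, depending on $t$, such that
\begin{equation}
f(t) - P_k(t) \;=\; \frac{f^{(k+1)}(\xi)}{(k+1)!}\,\prod_{i \in I}(t - t_i).
\end{equation}
This is derived in the usual way by applying Rolle's theorem $k+1$ times to the auxiliary function $\varphi(\tau) = f(\tau) - P_k(\tau) - \lambda \prod_{i\in I}(\tau - t_i)$ with $\lambda$ chosen so that $\varphi(t) = 0$.

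Next I would bound each factor of the remainder. By continuity of $f^{(k+1)}$ on the compact interval $[t_{\min}, t_{\max}]$, there is a finite constant $M := \sup_\tau \|x_0^{(k+1)}(\tau)\| < \infty$ (taking the max over coordinates). For the node product, since $t$ is a skipped timestep that lies within the convex hull of the cached indices, each $|t - t_i| \le (k+1)h$, where $h$ is the maximum spacing between consecutive nodes. Hence $|\prod_{i\in I}(t - t_i)| \le (k+1)^{k+1} h^{k+1}$. Combining the two bounds yields $|f(t) - P_k(t)| \le C_k M\, h^{k+1}$ with $C_k = (k+1)^{k+1}/(k+1)!$ depending only on $k$.

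Finally, I would lift the scalar bound back to the vector setting: the interpolation formula~\eqref{for:la} is linear in the cached values, so it commutes with taking coordinates, and the same identity and bound hold for each coordinate of $x_0^\tau$. Taking the norm across coordinates (or simply applying the scalar bound to the coordinate achieving the maximum absolute deviation) gives $\|\hat{x}_0^t - x_0^t\| = \mathcal{O}(h^{k+1})$, with the implicit constant absorbing $C_k$ and $M$.

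The main obstacle I anticipate is mostly bookkeeping rather than a conceptual hurdle: justifying the $C^{k+1}$ regularity assumed for $x_0^\tau$ in the context of a discrete sampler (for which $x_0^\tau$ is only defined at scheduler timesteps), and being careful that $t$ indeed lies in the convex hull of the cached node set so that the node-product bound takes the clean form above. If $t$ happens to fall outside the hull (genuine extrapolation rather than interpolation), one either restricts the statement to interior $t$ or absorbs an additional $k$-dependent constant into the implicit big-$\mathcal{O}$; either way the $h^{k+1}$ rate is preserved.
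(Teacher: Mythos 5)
Your proposal is correct and follows essentially the same route as the paper's proof: both invoke the classical Lagrange remainder identity $x_0^t - \hat{x}_0^t = \frac{1}{(k+1)!}\,x_0^{(k+1)}(\xi)\prod_{i\in I}(t-t_i)$, bound the derivative on the compact interval, and bound each factor $|t-t_i|$ by a $k$-dependent multiple of $h$ to obtain the $\mathcal{O}(h^{k+1})$ rate. Your additional care with the componentwise (vector-valued) reduction and the convex-hull caveat is a harmless refinement of the same argument.
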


This multi-step cache-assisted pruning strategy significantly induce step-wise sparsity while preserving sample fidelity.  

\subsection{Token-wise Cache-Assisted Pruning}
\label{sec:fl_cache}
\begin{figure}
    \centering
    \includegraphics[width=0.95\linewidth, trim=0.3cm 0.3cm 0cm 0cm, clip]{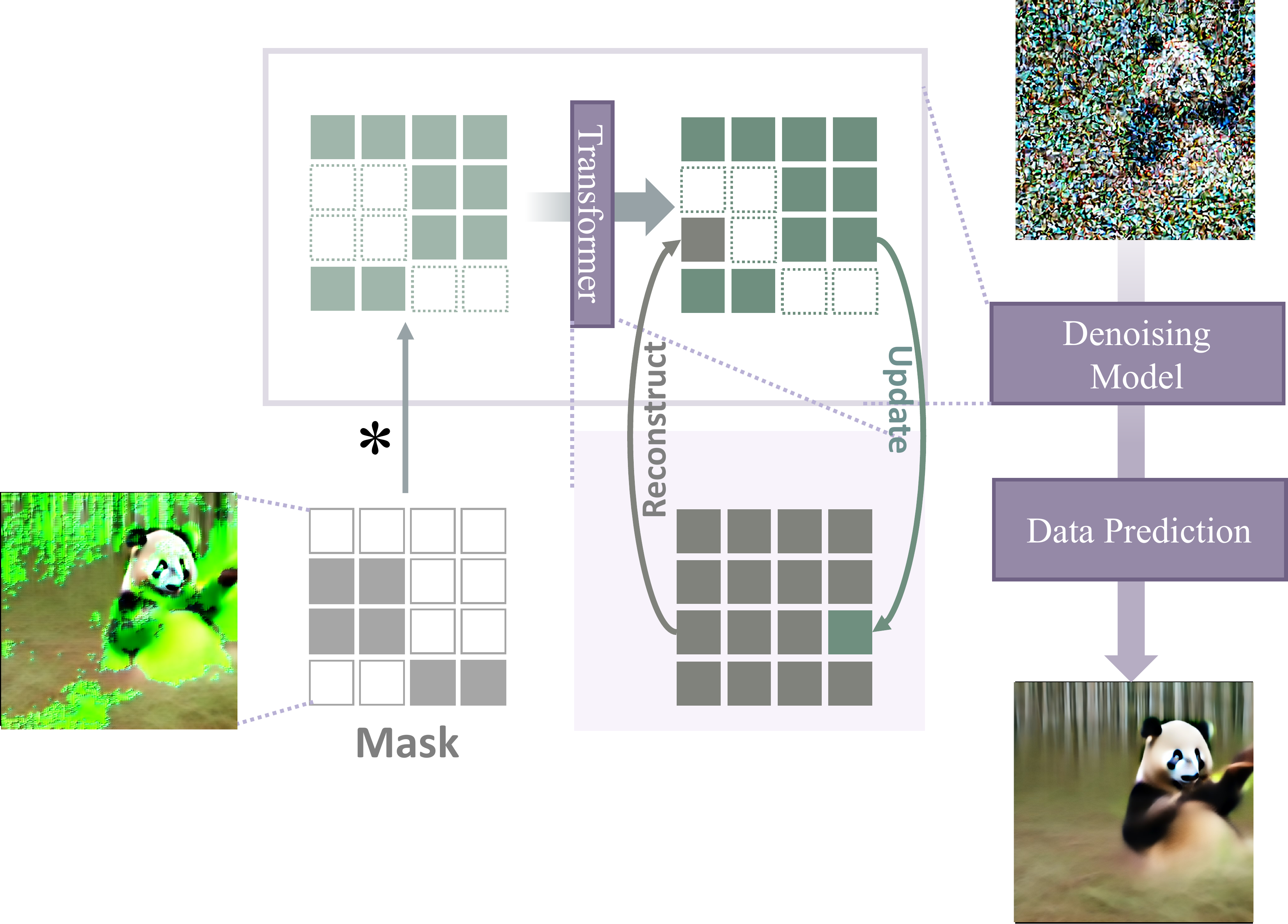}
    \caption{Illustration of the proposed Token-wise strategy. The criterion at token-level generates a ``mask'' to guide the pruning process at the input of the $l$-th attention layer. The pruned feature map is then reconstructed at the output of the layer using its cached representation \(\mathcal{C}_l\).}
    \label{fig:prune}
\end{figure}

At timestep $t$, if the Criterion~\ref{thm:criteria} returns \texttt{False}, we continue to evaluate our stability measure under a high granularity level. 
The core idea of our token-wise algorithm is aligned with its step-wise counterpart: (i) \textbf{fix} unstable tokens for full calculation (ii) \textbf{reduce} the stable token and approximate it by previous representation in latent cache.
This procedure partitions the tokens into two sets, \(\mathcal{I}_{\text{fix}}\) (unstable tokens) and \(\mathcal{I}_{\text{reduce}}\) (stable tokens), which define our adaptive pruning configuration for the subsequent timestep. Our algorithm is formulated as follows:

\begin{enumerate}[label=(\roman*)]
    \item \textit{Cache Initialization}: Let \(T^*\) denote the starting timestep for Cache-Assisted Pruning and \(i\) the caching interval. For an input at timestep \(t - 1\) and transformer layer \(l\), if $(t - 1 - T^*) \bmod i = 0$, we initialize the cache after a full computation: 
    \begin{equation}
    \mathcal{C}_l = A(\mathbf{x}_{t-1}^{(l)}) \in \mathbb{R}^{B \times N \times C}
    \end{equation}
    where \(\mathcal{C}_l\) is the feature map stored in the cache for layer \(l\), and it is updated throughout the denoising process.

    \item \textit{Cache Update}:
    When $(t - 1 - T^*) \bmod i \neq 0$, we prune the input data into \(\tilde{\mathbf{x}}_{t-1}^{(l)} \in \mathbb{R}^{B \times N' \times C}\) where its length $N' = |\mathcal{I}_{\text{fix}}|$. 
    After the Transformer (/Attention),
    we update \(\mathcal{C}_l\) with fresh tokens:
    \begin{equation}
        \mathcal{C}_l[i] \;=\; A(\tilde{\mathbf{x}}_{t-1}^{(l)})[\,I(i)\,] \quad \text{for } i \in  \mathcal{I}_{\text{fix}}.
    \end{equation}

    \item \textit{Cache-Assisted Reconstruction}:
    The pruned tokens are approximated by their cached representations.:
    \begin{equation}
        \hat{\mathbf{x}}_{t-1}^{(l)}[i] \;=\;
        \begin{cases}
            A(\tilde{\mathbf{x}}_{t-1}^{(l)})[\,I(i)\,], & \text{if } i \in \mathcal{I}_{\text{fix}}, \\
            \mathcal{C}_l[i], & \text{if } i \in \mathcal{I}_{\text{reduce}}.
        \end{cases}
    \end{equation}
    
    We keep the reconstructed sequence \(\hat{\mathbf{x}}_{t-1}^{(l)}\) synchronized with \(\mathcal{C}_l\) for subsequent timesteps.
\end{enumerate}

\section{Experiments}
\label{sec:experiment}

\begin{table*}[t]
    \small
    \caption{Quantitative results on MS-COCO 2017~\citep{lin2014microsoft}.}
    \label{tab:image_generation_1}
    \centering
    \resizebox{\textwidth}{!}{
    \begin{tabular*}{\textwidth}{@{\extracolsep{\fill}} lllcccc @{}}
    \toprule[1.5pt]
    Model  & Scheduler     & Methods            & PSNR $\uparrow$ & LPIPS $\downarrow$ & FID $\downarrow$ & Speedup Ratio \\
    \midrule
    \multirow{6}{*}{SD-2}
        & \multirow{3}{*}{DPM++} 
          & DeepCache          & 17.70 & 0.271 & 7.83 & 1.43$\times$ \\
        &                     & AdaptiveDiffusion  & 24.30 & 0.100 & 4.35 & 1.45$\times$ \\
        &                     & SADA              & \textbf{26.34} & \textbf{0.094} & \textbf{4.02} & \textbf{1.80$\times$} \\
        
    \cmidrule(lr){2-7}
        & \multirow{3}{*}{Euler} 
          & DeepCache          & 18.90 & 0.239 & 7.40 & 1.45$\times$ \\
        &                     & AdaptiveDiffusion  & 21.90 & 0.173 & 7.58 & \textbf{1.89$\times$} \\
        &                     & SADA            & \textbf{26.25} & \textbf{0.100} & \textbf{4.26} & 1.81$\times$ \\
    \midrule
    \multirow{6}{*}{SDXL}
        & \multirow{3}{*}{DPM++} 
          & DeepCache          & 21.30 & 0.255 & 8.48 & 1.74$\times$ \\
        &                     & AdaptiveDiffusion  & 26.10 & 0.125 & 4.59 & 1.65$\times$ \\
        &                     & SADA             & \textbf{29.36} & \textbf{0.084} & \textbf{3.51} & \textbf{1.86$\times$} \\
    \cmidrule(lr){2-7}
        & \multirow{3}{*}{Euler} 
          & DeepCache          & 22.00 & 0.223 & 7.36 & \textbf{2.16$\times$} \\
        &                     & AdaptiveDiffusion  & 24.33 & 0.168 & 6.11 & 2.01$\times$ \\
        &                     & SADA          & \textbf{28.97} & \textbf{0.093} & \textbf{3.76} & 1.85$\times$ \\
    \midrule
    \multirow{2}{*}{Flux}
        & \multirow{2}{*}{Flow-matching}
          & TeaCache           & 19.14 & 0.216 & 4.89 & 2.00$\times$ \\
        &                     & SADA             & \textbf{29.44} & \textbf{0.060} & \textbf{1.95} & \textbf{2.02$\times$} \\
    \bottomrule[1.5pt]
    \end{tabular*}}
    \vspace{1mm}
\end{table*}

\subsection{Experiment Settings}
\paragraph{Model Configurations} To verify the generalization of the proposed approach, we evaluate a set of widely used text-to-image models employing different backbone architectures: SD-2 (U-Net), SDXL \cite{podell2023sdxl} (modified U-Net), and Flux.1-dev \cite{flux1} (DiT). We perform evaluations using two sampling schedulers—Euler Discrete Multistep (EDM) \cite{karras2022elucidating} Solver (first-order) and DPM-Solver++ \cite{lu2022dpmpp} (second-order) —each configured with 50 sampling steps. All pipelines are implemented using the Huggingface Diffusers framework. Experiments with Flux.1-dev are executed on a single NVIDIA A100 GPU, while the remaining experiments are run on a single NVIDIA A5000 GPU.

\begin{table}[t]
  \small
  \setlength{\abovecaptionskip}{8pt}
  \setlength{\belowcaptionskip}{4pt}
  \caption{Ablation study on few-step sampling across schedulers. Results on MS-COCO 2017.}
  \label{tab:ablation_split2}
  \centering
  \resizebox{\linewidth}{!}{%
    \begin{tabular*}{\linewidth}{@{\extracolsep{\fill}} l c c c c c @{}}
      \toprule[1.5pt]
      \multicolumn{6}{c}{\textbf{SD-2}} \\
      \midrule
      \text{Scheduler} & \text{Steps}
        & \text{PSNR $\uparrow$}
        & \text{LPIPS $\downarrow$}
        & \text{FID $\downarrow$}
        & \text{Speedup} \\
      \midrule
      \multirow{3}{*}{DPM++}
        & 50 & 26.34 & 0.094 & 4.02 & 1.80$\times$ \\
        & 25 & 28.15 & 0.073 & 3.13 & 1.48$\times$ \\
        & 15 & 29.84 & 0.072 & 3.05 & 1.24$\times$ \\
      \midrule
      \addlinespace[3pt]
      \multirow{3}{*}{Euler}
        & 50 & 26.25 & 0.100 & 4.26 & 1.81$\times$ \\
        & 25 & 26.83 & 0.088 & 3.87 & 1.48$\times$ \\
        & 15 & 29.34 & 0.076 & 3.70 & 1.25$\times$ \\
      \midrule
      \multicolumn{6}{c}{\textbf{SDXL}} \\
      \midrule
      \text{Scheduler} & \text{Steps}
        & \text{PSNR $\uparrow$}
        & \text{LPIPS $\downarrow$}
        & \text{FID $\downarrow$}
        & \text{Speedup} \\
      \midrule
      \multirow{3}{*}{DPM++}
        & 50 & 29.36 & 0.084 & 3.51 & 1.86$\times$ \\
        & 25 & 30.84 & 0.073 & 2.80 & 1.52$\times$ \\
        & 15 & 31.91 & 0.073 & 2.54 & 1.29$\times$ \\
      \midrule
      \addlinespace[3pt]
      \multirow{3}{*}{Euler}
        & 50 & 28.97 & 0.093 & 3.76 & 1.85$\times$ \\
        & 25 & 29.42 & 0.085 & 3.13 & 1.50$\times$ \\
        & 15 & 31.28 & 0.084 & 3.26 & 1.26$\times$ \\
      \bottomrule[1.5pt]
    \end{tabular*}%
  }
\end{table}

\paragraph{Evaluation Metrics} 
We compare our proposed paradigm with widely-adopted training-free acceleration strategies, DeepCache, AdaptiveDiffusion, and TeaCache. \cite{ma2024deepcache, ye2024training, liu2025timestep}. 
DeepCache caches and reuses the latent feature in the middle layer of the U-Net architecture. 
AdaptiveDiffusion skips the noise predictor and reuses the previous predicted noise guided by a third-order estimator. 
TeaCache introduces a caching threshold that measures the error accumulation. 
All experiments are conducted using the MSCOCO-2017 validation set as generation prompts under identical conditions to assess efficiency and quality. 
We report speedup ratios compared to the baseline as a measure of generation efficiency. 
Generation quality is evaluated using the Peak Signal-to-Noise Ratio (PSNR), Learned Perceptual Image Patch Similarity (LPIPS), and Fréchet Inception Distance (FID)  between original generated and accelerated samples.

\subsection{Main Results}
As shown in Table~\ref{tab:image_generation_1}, SADA consistently outperforms DeepCache, AdaptiveDiffusion, and TeaCache across all settings.
Compared to other acceleration strategies, it drives FID down from 8.48 to 3.51 (59\% $\downarrow$) on SDXL with DPM-Solver++, and on Flux.1-dev from 4.89 to 1.95 (60\% $\downarrow$).
Across every configuration, SADA maintains LPIPS $\leq 0.100$ relative to the original samples—substantially better than competing methods—demonstrating only negligible perceptual deviation from the unmodified baseline.
Crucially, these significant quality improvements incur no extra compute beyond the acceleration itself, yielding consistent 1.8–2$\times$ speedups, outperforming the competing methods in the majority of cases, underscoring SADA’s effectiveness as a loss-free acceleration framework.

\subsection{Ablation Studies}
We provide justification for our choice of base step \(T = 50\) in Figure~\ref{fig:demo3}.
To evaluate SADA’s performance under few‐step sampling, Table~\ref{tab:ablation_split2} reports results on \{SD‐2, SDXL\} using \{Euler, DPM‐Solver++\} while varying the number of inference steps \(\{50, 25, 15\}\).
As the step count decreases, SADA achieves higher similarity: PSNR rises from 26.25 dB to 29.34 dB, FID falls from 4.26 to 3.70, and LPIPS drops from 0.100 to 0.076.
We attribute this trend to reduced error accumulation when using fewer steps.
Meanwhile, SADA can further accelerate sampling under a few-step setting. The speedup ratio maintains $\sim 1.5 \times$ under 25 steps denoising, and $\sim 1.25 \times$ under 15 steps denoising, highlighting SADA’s effectiveness under a low computational budget.
Note that the Lagrange interpolation parameters are slightly adjusted to match the shorter denoising schedules in these few‐step settings.

\begin{figure}
    \centering
    \includegraphics[width=1\linewidth]{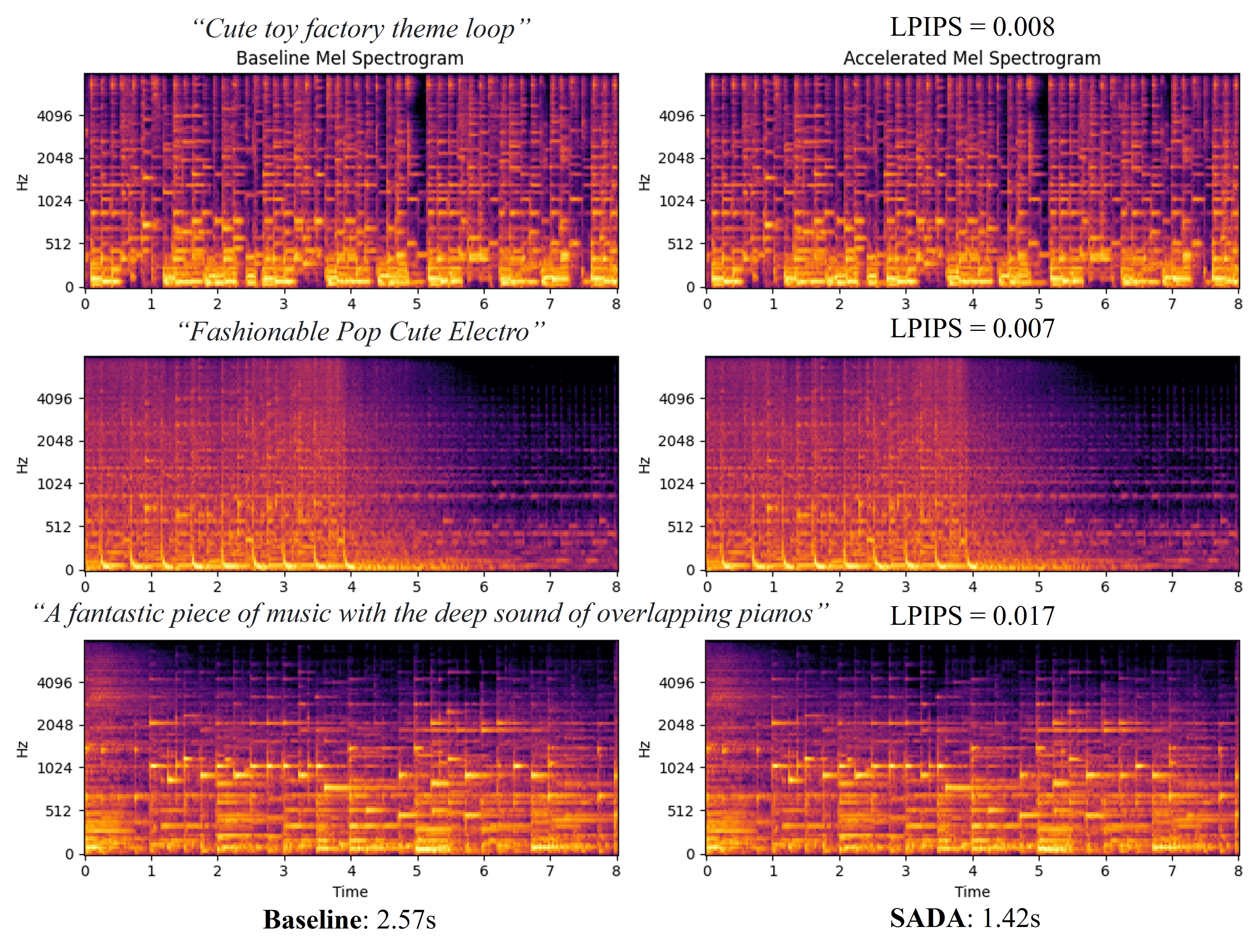}
    \caption{SADA deployment on MusicLDM on different text prompts. SADA accelerates MusicLDM by $\sim 1.81 \times$ while maintaining the spectrogram LPIPS under 0.020.}
    \label{fig:audio}
\end{figure}

\subsection{Downstream Tasks and Data Modalities}
This section demonstrates that SADA has potential to accelerate any generative modeling with an iterative generative process, regardless of the downstream task or data modality.

\paragraph{Data Modality}
We evaluate SADA on music and audio generation using the MusicLDM \cite{chen2024musicldm} pipeline to synthesize 8‐second clips. We compare both perceptual quality and spectrogram similarity between accelerated samples and the unmodified baseline. As shown in Figure \ref{fig:audio}, SADA achieves an LPIPS of \(\sim 0.010\) between accelerated and baseline audio, while reducing sampling time by \(\sim 1.81\times\).
This implies that SADA has the potential to implemented in future baselines models for any-modality generation with little to no modifications.

\paragraph{Downstream Task}
To validate cross‐pipeline compatibility, we apply SADA directly on top of ControlNet  \cite{zhang2023adding} without any additional fine‐tuning or architectural changes.
Figure \ref{fig:controlnet} demonstrates that SADA preserves faithfulness between ControlNet‐conditioned outputs while accelerating the sampling by $\sim 1.41 \times$. 
This implies that SADA could be seamlessly deployed in professional workflows.

\begin{figure}
    \centering
    \includegraphics[width=1\linewidth]{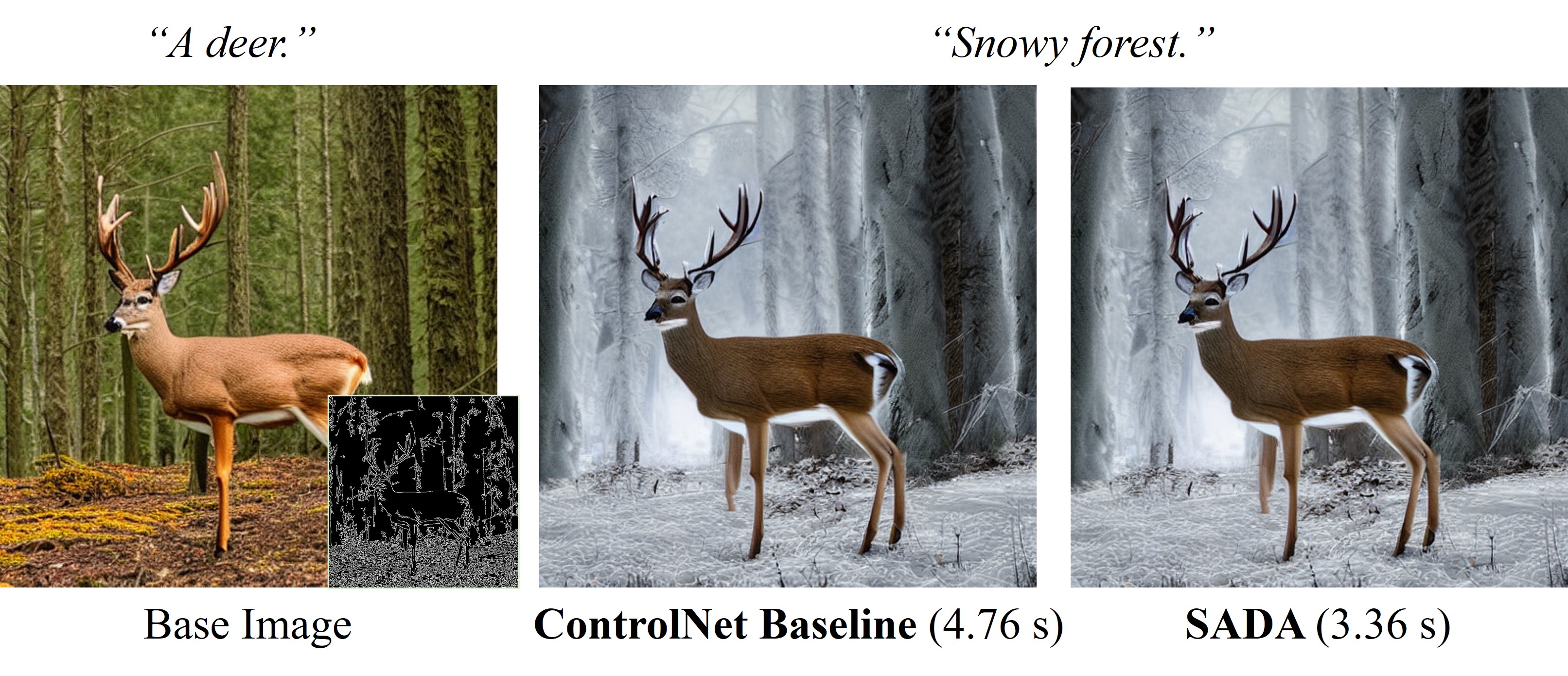}
    \caption{SADA deployment on ControlNet. We demonstrate the SD-1.5-based ControlNet pipeline trained on canny edges as conditional input. SADA accelerates ControlNet by $\sim 1.41 \times$ while preserving fidelity.}
    \label{fig:controlnet}
\end{figure}

\section{Conclusion}
\label{sec:conclusion}
We present Stability-guided Adaptive Diffusion Acceleration (SADA), a training-free paradigm that adaptively accelerates the sampling process of ODE-based generative models (mainly Diffusion and Flow-matching). 
Leveraging trajectory gradient calculated from the numerical solver, SADA dynamically exploits step-wise and token-wise sparsity during the procedure with principled and error-bounded approximation, bridging between the sampling trajectories and sparsity-aware optimizations.
Extensive experiments on SD‐2, SDXL, and Flux across EDM and DPM++ solvers both demonstrate consistent $\ge 1.8\times$ speedups with negligible fidelity loss (LPIPS $\le 0.10$, FID $\leq 4.5$) compared to unmodified baselines, significantly outperforming existing strategies.
Moreover, we show that SADA generalizes across modalities—achieving $\sim1.81\times$ acceleration on MusicLDM and $\sim1.41\times$ on ControlNet—without additional tuning.


\section*{Acknowledgment}
This material is based upon work supported by the U.S. National Science Foundation\includegraphics[width=0.15in]{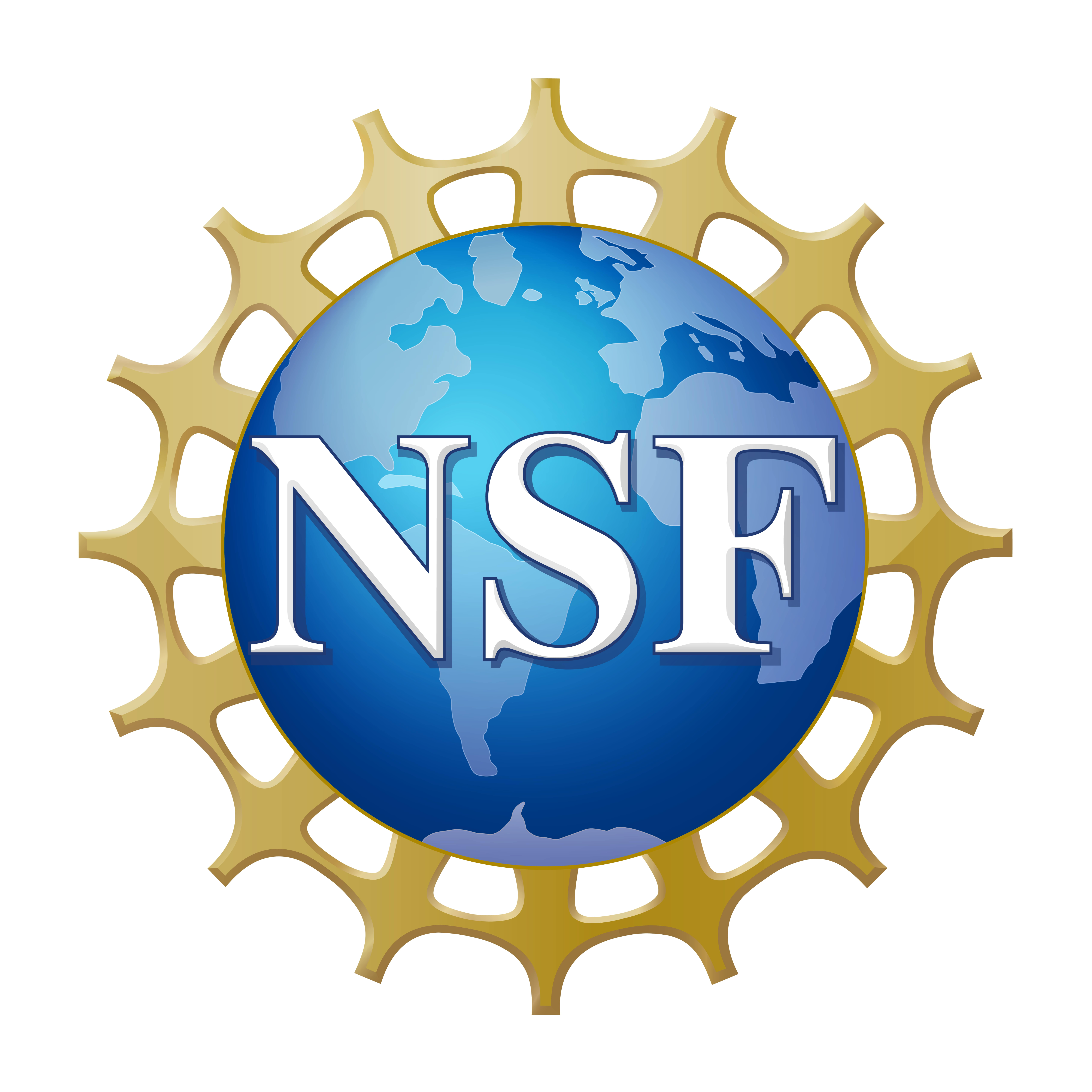} under award No.2112562. This work is also supported by ARO W911NF-23-2-0224 and NAIRR Pilot project NAIRR240270. Any opinions, findings and conclusions or recommendations expressed in this material are those of the author(s) and do not necessarily reflect the views of the U.S. National Science Foundation, ARO, NAIRR, and their contractors.
In addition, we thank the area chair and reviewers for their valuable comments.
\section*{Impact Statement}

This paper aims to advance the field of adaptive acceleration of generative models. 
We believe that our work has significant potential impact for the deployment and inference of generative models in any modalities. 
While there are many potential societal consequences of our work, none which we feel must be specifically highlighted here.

\bibliography{main_icml2025}
\bibliographystyle{icml2025}

\newpage
\appendix
\onecolumn
%

\section{Appendix}
\setcounter{equation}{0}
\setcounter{figure}{0}
\setcounter{table}{0}
\renewcommand\thefigure{A.\arabic{figure}}
\renewcommand\theequation{A.\arabic{equation}}
\renewcommand\thetable{A.\arabic{table}}
Due to the page limit of the main manuscript, we provide more theoretical and implementation details in the appendix, organized as follows:

\begin{itemize}
    \item Sec.~\ref{math_foundation}: \textbf{Mathematical Foundations}
    \begin{itemize}
        \item Sec.~\ref{theory_assump}: Theoretical Assumptions.
        \item Sec.~\ref{Proofs}: Proofs of Main Theorems.
    \end{itemize}
    \item Sec.~\ref{sec:appendix_section}: \textbf{Analysis on Existing Token Reduction Methods}.
    \item Sec.~\ref{sec:add_exp}: \textbf{Additional Experimentss}.
\end{itemize}

\section{Mathematical Foundations}
\label{math_foundation}
\subsection{Theoretical Assumptions}
\label{theory_assump}
\begin{enumerate}[label=\textbf{\arabic*}, ref=Assumption~\arabic*]
    \item \label{assump:net-lipschitz}
    The network output $\epsilon_\theta(x_{t}, t)$ is jointly Lipschitz continuous in both $x_t$ and $t$. In our experiments, we skip the first and last time steps to avoid potential issues with infinite Lipschitz \cite{yang2023lipschitz} constants near the boundaries.



\end{enumerate}

\subsection{Proofs of Main Theorems}
\label{Proofs}
\begingroup
\renewcommand\thetheoremx{3.2}%
\begin{theoremx}

\label{thm:global-average}
Let $x_t = \sqrt{\bar{\alpha}_t} x_0 + \sqrt{1 - \bar{\alpha}_t} \, \epsilon$, where $\epsilon \sim \mathcal{N}(0, I)$ and $x_0 \sim p(x_0)$ is independent of $\epsilon$. Then the expected value of $x_t$ over the joint distribution of $x_0$, $\epsilon$, and timestep $t$ satisfies

\begin{align}
\mathbb{E}_{x_0, \epsilon, t}[x_t] = \sqrt{\bar{\alpha}_t} \cdot \mathbb{E}_{x_0}[x_0].
\end{align}

\end{theoremx}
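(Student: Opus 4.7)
The proof is essentially an application of linearity of expectation and the independence assumptions built into the forward diffusion process, so the plan is short. I would begin by substituting the closed-form definition
\begin{equation}
x_t = \sqrt{\bar{\alpha}_t}\, x_0 + \sqrt{1-\bar{\alpha}_t}\, \epsilon
\end{equation}
directly inside the expectation and expanding using linearity. Because $\epsilon \sim \mathcal{N}(0, I)$ is independent of $x_0$ (and of the timestep schedule), the cross terms decouple into a product of expectations, and the noise contribution collapses because $\mathbb{E}[\epsilon]=0$.

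The second step is to handle the expectation over $t$. As the statement is written, the right-hand side still contains the factor $\sqrt{\bar{\alpha}_t}$, so I would interpret the left-hand side as a conditional expectation given $t$ (or equivalently, read the ``expectation over $t$'' as a placeholder for marginalizing the random inputs $x_0,\epsilon$ while keeping $t$ as a parameter of the schedule). Under this reading, $\sqrt{\bar{\alpha}_t}$ is a deterministic scalar that can be pulled out of the $x_0$-expectation, producing the claimed identity $\mathbb{E}[x_t] = \sqrt{\bar{\alpha}_t}\,\mathbb{E}_{x_0}[x_0]$. If one instead genuinely marginalizes over $t \sim \mathcal{U}([0,1])$, the same manipulation yields $\mathbb{E}_t[\sqrt{\bar{\alpha}_t}]\cdot \mathbb{E}_{x_0}[x_0]$, and the stated form is then read pointwise in $t$.

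There is no real obstacle in the computation; the only subtlety worth flagging in the writeup is the notational mismatch between ``expectation over $t$'' and the explicit $t$-dependence on the right-hand side. I would therefore devote one sentence at the start of the proof to pin down the interpretation (fixing $t$ and averaging only over $x_0,\epsilon$), and then the remainder reduces to two lines: expansion via linearity, followed by $\mathbb{E}[\epsilon]=0$ and independence of $x_0$ from $\epsilon$. The continuity corollary invoked in the main text, namely that the empirical mean trajectory $\{\mathbb{E}[x_t]\}_t$ is continuous in $t$, then follows immediately from the continuity of $t \mapsto \sqrt{\bar{\alpha}_t}$ under the standard variance schedule.
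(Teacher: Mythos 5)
Your proposal is correct and follows essentially the same route as the paper's proof: substitute the forward-process definition, apply linearity of expectation, and kill the noise term via $\mathbb{E}[\epsilon]=0$ with independence of $x_0$ and $\epsilon$. Your explicit remark about the notational mismatch between the outer $\mathbb{E}_t$ and the residual $t$-dependence on the right-hand side is in fact more careful than the paper, whose final step silently drops the expectation over $t$ while retaining $\sqrt{\bar{\alpha}_t}$.
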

\endgroup

\begin{proof}
By the definition of the forward process in diffusion models, the latent variable at timestep $t$ is given by:

\begin{align}
x_t = \sqrt{\bar{\alpha}_t} x_0 + \sqrt{1 - \bar{\alpha}_t} \, \epsilon.
\end{align}

Taking expectation over $x_0 \sim p(x_0)$, $\epsilon \sim \mathcal{N}(0, I)$, and $t \sim \text{Uniform}(\{1, \dots, T\})$, we get:

\begin{align}
\mathbb{E}_{x_0, \epsilon, t}[x_t]
= \mathbb{E}_{t} \left[ \mathbb{E}_{x_0}[\sqrt{\bar{\alpha}_t} x_0] + \mathbb{E}_{\epsilon}[\sqrt{1 - \bar{\alpha}_t} \, \epsilon] \right].
\end{align}

Since $\epsilon \sim \mathcal{N}(0, I)$, we have $\mathbb{E}[\epsilon] = 0$, and thus:

\begin{align}
\mathbb{E}_{x_0, \epsilon, t}[x_t]
= \mathbb{E}_{t} \left[ \sqrt{\bar{\alpha}_t} \cdot \mathbb{E}_{x_0}[x_0] \right]
= \sqrt{\bar{\alpha}_t} \cdot \mathbb{E}_{x_0}[x_0].
\end{align}

This concludes the proof.
\end{proof}

\begingroup
\renewcommand\thetheoremx{3.3}%
\begin{theoremx}[Consistency of the network estimator under trajectory approximation]\label{thm:network-consistency}
Let the network $\epsilon_\theta(x, t)$ be trained with the standard
mean-squared error (MSE) objective:
\begin{align}
\mathcal L(\theta) =
  \mathbb E_{x_0, \epsilon, t}\left[
     \|\epsilon - \epsilon_\theta(x_t, t)\|^2
  \right],
\quad
x_t = \sqrt{\bar\alpha_t} x_0 + \sigma_t \epsilon.
\end{align}
Suppose $\theta^\star$ minimizes $\mathcal L$, and the training is sufficiently converged. Then, following\textbf{~\ref{assump:net-lipschitz}} that $\epsilon_\theta(x_{t}, t)$ is Lipschitz in $x_{t}$ and $t$,  
we have the following consistency property for sampling-time inputs $\hat{x}_t$:
\begin{align}
\mathbb{E}_{x_{0},\epsilon, t}[\epsilon  - \epsilon_{\theta^\star}(\hat{x}_t, t)] \to 0
\quad \text{as } \|\hat{x}_t - x_t\|\to 0.
\end{align}
\end{theoremx}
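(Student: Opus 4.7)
The plan is to prove Theorem 3.3 by splitting the quantity $\mathbb{E}[\epsilon - \epsilon_{\theta^\star}(\hat{x}_t, t)]$ into a "training-optimality" piece and a "Lipschitz-perturbation" piece, and then showing that each vanishes — the first identically, the second in the limit $\|\hat x_t - x_t\| \to 0$.

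First I would establish the standard fact that the MSE-optimal denoiser is the conditional mean: since $\theta^\star$ minimizes $\mathcal{L}(\theta) = \mathbb{E}_{x_0,\epsilon,t}\|\epsilon - \epsilon_\theta(x_t,t)\|^2$, a pointwise variational argument (or tower-property calculation) gives $\epsilon_{\theta^\star}(x_t,t) = \mathbb{E}[\epsilon \mid x_t, t]$ almost surely. In particular,
\begin{equation}
\mathbb{E}_{x_0,\epsilon,t}\bigl[\epsilon - \epsilon_{\theta^\star}(x_t,t)\bigr] = \mathbb{E}_{x_t,t}\bigl[\mathbb{E}[\epsilon \mid x_t,t] - \epsilon_{\theta^\star}(x_t,t)\bigr] = 0.
\end{equation}
This is the piece that removes the "true trajectory" contribution and isolates the approximation error.

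Next I would use the trivial decomposition
\begin{equation}
\mathbb{E}\bigl[\epsilon - \epsilon_{\theta^\star}(\hat x_t, t)\bigr] \;=\; \underbrace{\mathbb{E}\bigl[\epsilon - \epsilon_{\theta^\star}(x_t,t)\bigr]}_{=0} \;+\; \mathbb{E}\bigl[\epsilon_{\theta^\star}(x_t,t) - \epsilon_{\theta^\star}(\hat x_t,t)\bigr],
\end{equation}
and bound the remaining term via the Lipschitz hypothesis from Assumption~\ref{assump:net-lipschitz}. Specifically, with Lipschitz constant $L$ in the spatial argument, Jensen's inequality yields
\begin{equation}
\bigl\|\mathbb{E}[\epsilon_{\theta^\star}(x_t,t) - \epsilon_{\theta^\star}(\hat x_t,t)]\bigr\| \;\le\; \mathbb{E}\bigl\|\epsilon_{\theta^\star}(x_t,t) - \epsilon_{\theta^\star}(\hat x_t,t)\bigr\| \;\le\; L\,\mathbb{E}\|x_t - \hat x_t\|.
\end{equation}
As $\|\hat x_t - x_t\| \to 0$ (either deterministically or in $L^1$), this upper bound vanishes, which delivers the claimed consistency.

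The only real subtlety — and where I expect the main (albeit mild) obstacle to lie — is justifying the limit rigorously when $\hat x_t$ is itself a random perturbation (e.g., produced by the estimators of Theorems~\ref{thm:third-order-estimator-1} or \ref{thm:lagrange-cache}). Here I would appeal to dominated convergence: by the Lipschitz bound, the integrand $\|\epsilon_{\theta^\star}(x_t,t) - \epsilon_{\theta^\star}(\hat x_t,t)\|$ is controlled by $L\|x_t - \hat x_t\|$, which is integrable under the mild moment conditions implicit in the trajectory (Gaussian noise, bounded data support, and the boundary-skipping convention noted in Assumption~\ref{assump:net-lipschitz} that rules out blow-up of $L$ near $t=0,1$). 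A secondary care-point is the phrase "sufficiently converged," which I would interpret as an approximate optimality $\mathcal{L}(\theta) \le \mathcal{L}(\theta^\star) + \delta$; this introduces an additive $O(\sqrt{\delta})$ slack in step one, but does not affect the qualitative statement.
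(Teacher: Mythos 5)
Your proposal is correct and follows essentially the same route as the paper's proof: identify $\epsilon_{\theta^\star}(x_t,t)=\mathbb{E}[\epsilon\mid x_t,t]$ so the true-trajectory term has zero mean, then decompose via the triangle inequality and control the remaining term with the Lipschitz assumption. Your added remarks on dominated convergence and approximate optimality are sensible refinements the paper leaves implicit, but they do not change the argument.
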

\endgroup
\begin{proof}
Since $x_t$ is a linear deterministic combination of $(x_0, \epsilon)$ at time step $t$, the distribution $p(x_t, t)$ is induced from the joint distribution of $x_0, \epsilon$, and $t$. Therefore, minimizing the MSE loss over $(x_0, \epsilon, t)$ is equivalent to minimizing it over $(x_t, t)$. The optimal solution in the $L^2$ sense is the conditional expectation:

\begin{align}
\epsilon_{\theta^\star}(x_t, t) = \mathbb{E}_{x_{0},\epsilon,t|x_{t},t}[\epsilon \mid x_t, t]
\end{align}

Taking expectation again:
\begin{align}
\mathbb{E}_{x_0,\epsilon,t}[\epsilon - \epsilon_{\theta^\star}(x_t, t)] = 0.
\end{align}

Now, for any approximation \(\hat{x}_t\), by Lipschitz continuity:
\begin{align}
\|\epsilon_{\theta^\star}(\hat{x}_t, t) - \epsilon_{\theta^\star}(x_t, t)\| \le L \|\hat{x}_t - x_t\|.
\end{align}
Thus,
\begin{align}
\left\|\mathbb{E}_{x_0,\epsilon,t}[\epsilon - \epsilon_{\theta^\star}(\hat{x}_t, t)]\right\|
\le \left\|\mathbb{E}_{x_0,\epsilon,t}[\epsilon - \epsilon_{\theta^\star}(x_t, t)]\right\|
+ \mathbb{E}_{x_0,\epsilon,t}\left[\|\epsilon_{\theta^\star}(x_t, t) - \epsilon_{\theta^\star}(\hat{x}_t, t)\|\right].
\end{align}
The first term is zero; the second vanishes as \(\hat{x}_t \to x_t\).

Therefore,
\begin{align}
\mathbb{E}_{x_0,\epsilon,t}[\epsilon - \epsilon_{\theta^\star}(\hat{x}_t, t)] \to 0,
\end{align}
as \(\|\hat{x}_t - x_t\| \to 0\).
\end{proof}
\begingroup
\renewcommand\thetheoremx{3.7}%
\begin{theoremx}[Lagrange Interpolation for Cache-Assisted Reconstruction]
\label{thm:lagrange-cache}
Let $I = \{0, 1, \dots, k\}$ be $k+1$ distinct indexes with known cached values $\{x_0^{t_i}\}_{i\in I}$. For any skipped timestep $t \notin \{t_{i}\}_{i\in I}$, define the interpolated reconstruction as:
\begin{align}
\hat{x}_0^t := \sum_{i \in I} \left( \prod_{j \in I \setminus \{i\}}\frac{t - t_j}{t_i - t_j} \right) x_0^{t_i}. \label{for:la}
\end{align}
Then, under the assumption that $x_0^\tau$ is $(k+1)$-times continuously differentiable over $\tau \in [t_{\min}, t_{\max}]$, the interpolation error satisfies:
\begin{align}
\|\hat{x}_0^t - x_0^t\| = \mathcal{O}(h^{k+1}),
\end{align}
where $h$ is the maximum step spacing among $\{t_i\}$.
\end{theoremx}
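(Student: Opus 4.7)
The plan is to recognize that the reconstruction formula in Eq.~\eqref{for:la} is exactly the Lagrange interpolating polynomial of degree $k$ through the nodes $\{(t_i, x_0^{t_i})\}_{i \in I}$, and then invoke the classical Lagrange remainder theorem applied componentwise to the vector-valued trajectory $\tau \mapsto x_0^\tau$. Writing $\ell_i(t) := \prod_{j \in I \setminus \{i\}} (t - t_j)/(t_i - t_j)$ for the Lagrange basis, $\hat{x}_0^t = \sum_{i \in I} \ell_i(t)\,x_0^{t_i}$ is the unique polynomial of degree at most $k$ that agrees with $x_0^\tau$ at each cached $t_i$.

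Next, for each coordinate of the latent, the standard remainder formula for polynomial interpolation states that if $f \in C^{k+1}[t_{\min}, t_{\max}]$, there exists $\xi_t$ in that interval (depending on $t$) such that
\begin{align}
f(t) - \sum_{i \in I}\ell_i(t) f(t_i) \;=\; \frac{f^{(k+1)}(\xi_t)}{(k+1)!}\,\prod_{i \in I}(t - t_i).
\end{align}
Applying this to each component of $x_0^\tau$ (which is $C^{k+1}$ by hypothesis) and then taking norms yields
\begin{align}
\bigl\|\hat{x}_0^t - x_0^t\bigr\| \;\le\; \frac{M}{(k+1)!}\,\Bigl|\prod_{i \in I}(t - t_i)\Bigr|,
\end{align}
where $M := \sup_{\tau \in [t_{\min}, t_{\max}]} \bigl\|\partial_\tau^{k+1} x_0^\tau\bigr\|$ is finite on the compact interval by continuity of the $(k+1)$-th derivative.

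Finally, I would bound the node product. Since $t$ lies in (or within $O(h)$ of) the convex hull of the cached times and $h$ denotes the maximum spacing, each factor satisfies $|t - t_i| \le (k+1)\,h$, so $\prod_{i \in I} |t - t_i| \le (k+1)^{k+1} h^{k+1}$. Combining with the previous display gives $\|\hat{x}_0^t - x_0^t\| \le C_k\,M\,h^{k+1} = \mathcal{O}(h^{k+1})$, which is the stated bound.

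The main obstacle is minor: the theorem is a direct vector-valued instance of a textbook result, so the only care needed is (i) justifying that the componentwise remainder can be aggregated into a norm bound without losing the $\mathcal{O}(h^{k+1})$ rate, and (ii) ensuring the interpolation point $t$ remains within the interval spanned by the rolling buffer $\{t_i\}_{i \in I}$ (so we are truly interpolating, not extrapolating, and the Mean Value form of the remainder applies). In the pipeline of Section~\ref{sec:am_method}, the rolling-buffer design guarantees this bracketing, so no extra work is required beyond citing the standard theorem.
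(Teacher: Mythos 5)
Your proposal is correct and follows essentially the same route as the paper's own proof: identify Eq.~\eqref{for:la} as the degree-$k$ Lagrange interpolant, invoke the classical remainder formula $x_0^t - \hat{x}_0^t = \frac{1}{(k+1)!}\,\partial_\xi^{k+1}x_0^\xi \cdot \prod_{i\in I}(t-t_i)$, and bound each factor $|t-t_i|$ by a constant multiple of $h$. Your added care about componentwise aggregation for the vector-valued trajectory and about $t$ lying inside the node interval is sensible but does not change the argument.
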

\endgroup

\begin{proof}
The expression in Equation~\ref{for:la} is the Lagrange interpolation formula for approximating a function at an unobserved point $t$ using $k+1$ known values $\{x_0^{t_i}\}$ at points $\{t_i\} \in I$.

By classical interpolation theory, the interpolation error at $t$ for a $(k+1)$-times continuously differentiable function $x_0^\tau$ satisfies:

\begin{align}
x_0^t - \hat{x}_0^t = \frac{1}{(k+1)!} \frac{d^{k+1} x_0^\xi}{d\xi^{k+1}} \cdot \prod_{i \in I} (t - t_i),
\quad \text{for some } \xi \in [t_{\min}, t_{\max}].
\end{align}

Taking norm and bounding the derivative gives:

\begin{align}
\|\hat{x}_0^t - x_0^t\| \le \frac{1}{(k+1)!} \max_{\xi} \left\| \frac{d^{k+1} x_0^\xi}{d\xi^{k+1}} \right\| \cdot \prod_{i \in I} |t - t_i|.
\end{align}

If all time steps are approximately uniformly spaced with step size $h$, then \(|t - t_i| \le C h\) for some constant $C$, and hence:

\begin{align}
\|\hat{x}_0^t - x_0^t\| = \mathcal{O}(h^{k+1}),
\end{align}
as claimed.
\end{proof}

\begingroup
\renewcommand\thetheoremx{3.1}%
\begin{theoremx}[Backward Extrapolation via Lagrange Interpolation]
\label{thm:backward-extrap}
Let $f \in C^k[a, b]$ be a smooth function and let $x_0 := x$, with equally spaced grid points $x_i := x + ih$ for $i = 0, 1, \dots, k-1$.  
Define $P_{k-1}(t)$ as the degree-$(k-1)$ Lagrange interpolant of $f$ at $\{x_i\}_{i=0}^{k-1}$.

Then, the extrapolated value at $x - h$ satisfies:
\begin{align}
f(x - h) = \sum_{i=0}^{k-1} \alpha_i f(x_i) + R_k(h),
\end{align}
where the weights are given by:
\begin{align}
\alpha_i = (-1)^i \binom{k}{i+1}, \quad i = 0, 1, \dots, k-1,
\end{align}
and the remainder term is:
\begin{align}
R_k(h) = \frac{f^{(k)}(\xi)}{k!} \prod_{j=0}^{k-1} (x - h - x_j),
\qquad \xi \in [x - h, x + (k-1)h].
\end{align}
and the error bound of the remainder term is:
\begin{align}
    R_k(h) = \mathcal{O}(h^k).
\end{align}

\end{theoremx}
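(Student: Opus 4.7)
The plan is to reduce this to the classical Lagrange interpolation error formula together with an explicit evaluation of the Lagrange basis polynomials at the extrapolation point $t = x - h$. Since both the representation of $P_{k-1}$ in the Lagrange basis and the standard remainder theorem are textbook results, the real work amounts to a careful combinatorial simplification of two products.

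First I would invoke the remainder theorem for polynomial interpolation: for $f \in C^k$, there exists $\xi$ in the smallest interval containing the nodes $\{x_0,\dots,x_{k-1}\}$ and the evaluation point $t$ such that
$$f(t) - P_{k-1}(t) = \frac{f^{(k)}(\xi)}{k!}\prod_{j=0}^{k-1}(t - x_j).$$
Substituting the equally spaced nodes $x_j = x + jh$ and $t = x - h$, each factor $t - x_j$ collapses to $-(j+1)h$, and the product telescopes to $(-1)^k\,k!\,h^k$. Dividing by $k!$ yields $R_k(h) = (-1)^k h^k f^{(k)}(\xi)$; on the bounded interval $[x-h,\,x+(k-1)h]$ the factor $f^{(k)}(\xi)$ is uniformly bounded, which delivers the $\mathcal{O}(h^k)$ remainder.

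Second I would compute the weights $\alpha_i = L_i(x - h)$ explicitly using $L_i(t) = \prod_{j \neq i}(t - x_j)/(x_i - x_j)$. Evaluated at $t = x-h$, the numerator is a product of terms $-(j+1)h$ for $j \neq i$, which simplifies (after pulling out the missing factor $i+1$) to $(-h)^{k-1}\,k!/(i+1)$. The denominator splits at $j = i$ into a positive half $j < i$ contributing $i!$ and a negative half $j > i$ contributing $(-1)^{k-1-i}(k-1-i)!$. The powers of $h$ cancel, the signs collapse to $(-1)^i$, and the factorials rearrange into $(-1)^i\,k!/((i+1)!\,(k-1-i)!) = (-1)^i \binom{k}{i+1}$, giving exactly the advertised $\alpha_i$. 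Combining $P_{k-1}(x-h) = \sum_{i=0}^{k-1} \alpha_i f(x_i)$ with the remainder then produces the stated identity.

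The main obstacle is not analytic but bookkeeping: tracking the signs, missing indices, and factorials so that the quotient of the two products collapses cleanly into a single binomial coefficient. An alternative, arguably more transparent derivation uses Newton's forward-difference form $P_{k-1}(x+sh) = \sum_{m=0}^{k-1}\binom{s}{m}\Delta^m f(x_0)$ evaluated at $s = -1$, where $\binom{-1}{m} = (-1)^m$, followed by expanding each $\Delta^m f(x_0)$ in terms of $\{f(x_j)\}$ and collecting coefficients; this route reaches the same $\alpha_i$ by shifting all the sign bookkeeping into the identity $\binom{-1}{m} = (-1)^m$. I would present whichever of the two derivations is more concise in the final write-up, and defer the $\mathcal{O}(h^k)$ bound to the single-line observation that $f^{(k)}$ is continuous on a closed interval.
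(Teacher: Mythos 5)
Your proposal is correct and follows essentially the same route as the paper's proof: evaluate the Lagrange basis polynomials at $t = x - h$, simplify the quotient of products of equally spaced node differences to $(-1)^i\binom{k}{i+1}$, and invoke the standard interpolation remainder with $\prod_{j=0}^{k-1}(x-h-x_j) = (-1)^k k!\,h^k$ to get $R_k(h) = \mathcal{O}(h^k)$. Your explicit bookkeeping of the numerator as $(-1)^{k-1}h^{k-1}k!/(i+1)$ and the denominator as $(-1)^{k-1-i}i!\,(k-1-i)!\,h^{k-1}$ is in fact cleaner than the paper's intermediate display, which contains a cosmetic misprint ($(k-1)!$ and $(-1)^{k-1}$ where $k!$ and $(-1)^i$ belong) before landing on the same correct weights.
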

\endgroup

\begin{proof}
Let $x_i := x + ih$ for $i = 0, \dots, k-1$.  
We construct the Lagrange interpolating polynomial:
\begin{align}
P_{k-1}(t) = \sum_{i=0}^{k-1} f(x_i)\,\ell_i(t),
\end{align}
where the Lagrange basis functions are:
\begin{align}
\ell_i(t) = \prod_{\substack{j=0 \\ j \ne i}}^{k-1} \frac{t - x_j}{x_i - x_j}.
\end{align}

Evaluating at $t = x - h$ gives:
\begin{align}
f(x - h) = P_{k-1}(x - h) + R_k(h),
\end{align}
where the remainder term is the standard Lagrange error:
\begin{align}
R_k(h) = \frac{f^{(k)}(\xi)}{k!} \prod_{j=0}^{k-1} (x - h - x_j), \quad \xi \in [x - h, x + (k-1)h].
\end{align}

Since $x_j = x + jh$, we have: $R_k(h) = \mathcal{O}(h^k)$. Now compute the weights $\alpha_i := \ell_i(x - h)$, we have:
\begin{align}
\ell_i(x - h) = \prod_{\substack{j = 0 \\ j \ne i}}^{k-1} \frac{(x - h) - (x + jh)}{(x + ih) - (x + jh)}
= \prod_{\substack{j = 0 \\ j \ne i}}^{k-1} \frac{-(j+1)h}{(i - j)h}.
\end{align}

Canceling $h$ and simplifying signs:
\begin{align}
\ell_i(x - h) = (-1)^{k-1} \cdot \frac{(k-1)!}{i!(k - 1 - i)!} \cdot \frac{1}{i + 1}
= (-1)^i \binom{k}{i+1}.
\end{align}

Therefore:
\begin{align}
f(x - h) = \sum_{i=0}^{k-1} (-1)^i \binom{k}{i+1} f(x_i) + R_k(h),
\end{align}
as claimed.
\end{proof}

\begin{proposition}[High-order backward difference as weighted combination]
\label{pro:high-order}
Let $\alpha_i := (-1)^i \binom{k}{i+1}$ be the extrapolation coefficients in Theorem~\ref{thm:backward-extrap}. Then the following linear combination defines the $k$-th order backward finite difference:
\begin{align}
f(x-h)-\sum_{i=0}^{k-1} \alpha_i f(x + ih)
= \Delta^{(k)} f(x - h), \label{formula:difference_rep}
\end{align}
where $\Delta^{(k)}$ is the standard $k$-th order backward difference operator:
\begin{align}
\Delta^{(k)} f(x-h) := \sum_{i=0}^{k} (-1)^i \binom{k}{i} f(x + ih).
\end{align}
\end{proposition}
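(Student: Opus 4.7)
The plan is a direct algebraic verification by reindexing the signed binomial sum. The left-hand side of \eqref{formula:difference_rep} is exactly the extrapolation residual $f(x-h) - P_{k-1}(x-h)$ from Theorem~\ref{thm:backward-extrap}, so the task reduces to identifying this residual as the standard $k$-th finite difference of $f$ evaluated on the $k{+}1$ uniformly spaced nodes $\{x-h,\,x,\,x+h,\ldots,x+(k-1)h\}$.

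First, I would carry out the substitution $j = i+1$ in the sum $\sum_{i=0}^{k-1} (-1)^i \binom{k}{i+1} f(x+ih)$ appearing on the left. As $i$ ranges over $\{0,\ldots,k-1\}$, the new index $j$ ranges over $\{1,\ldots,k\}$, $(-1)^i = -(-1)^{j}$, and $\binom{k}{i+1} = \binom{k}{j}$. This reindexes the sum as
\begin{align}
\sum_{i=0}^{k-1}\alpha_i f(x+ih) \;=\; -\sum_{j=1}^{k} (-1)^{j}\binom{k}{j}\, f\bigl(x+(j-1)h\bigr),
\end{align}
so the left-hand side of the claimed identity becomes $f(x-h) + \sum_{j=1}^{k} (-1)^{j}\binom{k}{j} f\bigl(x+(j-1)h\bigr)$.

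Next, I would absorb the isolated $f(x-h)$ into the sum as the $j=0$ term, using $(-1)^0\binom{k}{0} = 1$ and $x+(0-1)h = x-h$. The resulting expression
\begin{align}
\sum_{j=0}^{k} (-1)^j \binom{k}{j}\, f\bigl((x-h)+jh\bigr)
\end{align}
is precisely the signed binomial expansion of $f$ on the arithmetic progression anchored at $x-h$ with step $h$, which is the finite-difference operator $\Delta^{(k)}f(x-h)$ as defined in the statement (read with the node sequence anchored at $x-h$).

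There is no real analytic obstacle; the argument is a purely algebraic index shift. The only point requiring care is bookkeeping the sign flip $(-1)^i \to -(-1)^j$ and the binomial relabeling $\binom{k}{i+1}\to\binom{k}{j}$ across the reindexing, together with confirming that $\binom{k}{0}f(x-h)$ is the correct $j=0$ contribution. As sanity checks, $k=1$ gives $f(x-h)-f(x)$ and $k=2$ gives $f(x-h)-2f(x)+f(x+h)$, matching the classical backward-difference formulas on the intended node set.
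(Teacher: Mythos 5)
Your proof is correct and follows essentially the same route as the paper's one-line argument: substitute $\alpha_i=(-1)^i\binom{k}{i+1}$ and match terms against the definition of $\Delta^{(k)}$, which you make explicit via the reindexing $j=i+1$. Your reading of the node set as anchored at $x-h$ (so that the $j=0$ term is $\binom{k}{0}f(x-h)$) is the correct interpretation of the paper's displayed definition, and your $k=1,2$ sanity checks confirm it.
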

\begin{proof}
    The result follows directly by substituting the expression for $\alpha_i$ from Theorem~\ref{thm:backward-extrap} into Equation~\eqref{formula:difference_rep} and matching terms with the standard definition of $\Delta^{(k)}$.
\end{proof}
 \begin{theorem}[Adams-Moulton Method via Forward Lagrange Quadrature]

\label{thm:adams-moulton}
Let $y(x) \in C^k [a, b]$ and let $x_0 := x$ with equally spaced grid points $x_i := x + ih$ for $i = 0, 1, \dots, k-1$.
Let $P_{k-1}(t)$ be the degree-$(k{-}1)$ Lagrange interpolant of $f$, which is derivative of $y$, at $\{{x_i}\}_{i=0}^{k-1}$.
 
Define the one-step approximation of the integral:

\begin{align}
y(x - h) = y(x) - \int_{x-h}^{x} f(s)\text{d}s.
\end{align}

Then, the \textbf{Adams-Moulton method of order $k$} is:

\begin{align}
\hat{y}_{n-1} = y_n - h \left( \sum_{i=0}^{k-1} \beta_i f_{n + i} + \beta_{-1} f_{n-1} \right),
\end{align}
where:
$y_{n+j} = y(x_{j})$, $f_{n+j} = f(x_j)$, and the quadrature weights $\beta_{-1}, \beta_0, \dots, \beta_{k-1}$ are given by:

\begin{align}
\beta_j := \int_0^1 \ell_j(s)\,\text{d}s, \quad \text{for } j = -1, 0, \dots, k-1,
\end{align}

where $\ell_j(s)$ are the Lagrange basis polynomials defined on nodes $\tau_j = j$, with $\tau_{-1} = -1$ for the future value $y_{n-1}$.

The local truncation error is:

\begin{align}
\hat{y}_{n-1} - y_{n-1} = \mathcal{O}(h^{k+1}).
\end{align}

\end{theorem}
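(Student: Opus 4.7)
The plan is to mimic the classical derivation of an implicit linear multistep method: integrate the ODE exactly over one step, replace the integrand by its Lagrange interpolant at the specified nodes, and then bound the remainder using standard polynomial interpolation error theory. Concretely, I would start from the exact identity
\begin{align}
y(x-h) = y(x) - \int_{x-h}^{x} f(s)\, \text{d}s,
\end{align}
and form the degree-$k$ Lagrange interpolant $P_k$ of $f$ at the $k+1$ nodes $\{x_{-1}, x_0, x_1, \dots, x_{k-1}\}$, where $x_{-1} := x - h$ is the \emph{implicit} (new) node that makes the scheme Adams–Moulton rather than Adams–Bashforth. Writing $f = P_k + (f - P_k)$ and splitting the integral gives the method plus a remainder.

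Next I would make the change of variables $s = x + h\tau$ so that the integration interval $[x-h,x]$ maps to $\tau \in [-1, 0]$ and the nodes sit at $\tau_j = j$ for $j = -1, 0, \dots, k-1$. Under this substitution $\text{d}s = h\,\text{d}\tau$ and the Lagrange basis becomes the fixed, $h$-independent $\ell_j(\tau)$ on these integer nodes, so
\begin{align}
\int_{x-h}^{x} P_k(s)\,\text{d}s = h\sum_{j=-1}^{k-1} f(x_j)\,\beta_j,\qquad \beta_j := \int_{-1}^{0}\ell_j(\tau)\,\text{d}\tau.
\end{align}
Identifying $f_{n+j} = f(x_j)$ and moving the $j=-1$ contribution out explicitly yields exactly the stated scheme $\hat{y}_{n-1} = y_n - h\bigl(\sum_{i=0}^{k-1}\beta_i f_{n+i} + \beta_{-1} f_{n-1}\bigr)$. (I would note that the theorem's ``$\int_0^1$'' in the definition of $\beta_j$ should be read as the integral over the image of $[x-h,x]$ under the chosen affine parameterization; the content is insensitive to that cosmetic choice.)

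For the local truncation error, I would apply the standard Lagrange remainder, which, under the assumed smoothness $f = y' \in C^{k}$, gives
\begin{align}
f(s) - P_k(s) = \frac{f^{(k+1)}(\xi(s))}{(k+1)!}\prod_{j=-1}^{k-1}(s - x_j),
\end{align}
with $\xi(s)$ in the convex hull of the nodes. Each of the $k+1$ factors of the node polynomial is $O(h)$ uniformly on $[x-h,x]$, so the product is $O(h^{k+1})$; integrating over an interval of length $h$ then yields $\hat{y}_{n-1} - y_{n-1} = O(h^{k+1})$, which is the claimed bound (and in fact pessimistic by one power of $h$, matching the standard Adams–Moulton order count). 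The main technical subtlety is just book-keeping: tracking the sign convention for backward propagation, and making sure the implicit node $x_{-1}$ is correctly included in both the interpolant and the remainder product, since that is precisely what separates the Adams–Moulton from the (explicit) Adams–Bashforth derivation and also what makes the $\beta_{-1}$ weight well-defined. Once the setup is right, everything else is an appeal to Theorem~\ref{thm:backward-extrap}-style Lagrange interpolation bounds and standard Newton–Cotes arguments.
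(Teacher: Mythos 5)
Your proposal follows essentially the same route as the paper's proof: integrate the ODE exactly over one step, replace the integrand by its Lagrange interpolant on the $k+1$ nodes including the implicit node $x-h$, change variables to a normalized coordinate so the weights $\beta_j$ become $h$-independent integrals of the basis polynomials, and bound the error via the standard interpolation remainder. Your treatment of the remainder is in fact slightly more explicit than the paper's (which simply asserts the $\mathcal{O}(h^{k+1})$ quadrature error), and your remark about the affine parameterization of the integration interval correctly identifies the only (cosmetic) discrepancy.
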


\begin{proof}

We aim to approximate the integral:

\begin{align}
y(x - h) = y(x) - \int_{x-h}^{x} f(s)\,\text{d}s.
\end{align}

Let us define a variable substitution $s = x - h + \tau h$, so that:

\begin{align}
\int_{x-h}^{x} f(s)\,\text{d}s = h \int_0^1 f(x-h + \tau h)\,\text{d}\tau.
\end{align}

Let $\tau_j = j$ for $j = 0, \dots, k-1$ and $\tau_{-1} = -1$ (the future point), and define Lagrange basis polynomials:

\begin{align}
\ell_j(\tau) = \prod_{\substack{i = -1 \\ i \ne j}}^{k-1} \frac{\tau - \tau_i}{\tau_j - \tau_i}.
\end{align}

Let $P_{k}(\tau) = \sum_{j = -1}^{k-1} f(x-h + \tau_j h) \ell_j(\tau)$ be the Lagrange interpolant of $f(x -h+ \tau h)$ using nodes $\tau_j$.

Now approximate the integral:

\begin{align}
\widehat{\int_0^1 f(x -h+ \tau h)\,\text{d}\tau} = \int_0^1 P_{k}(\tau)\,\text{d}\tau = \sum_{j = -1}^{k-1} \left( \int_0^1 \ell_j(\tau)\,\text{d}\tau \right) f(x-h + \tau_j h).
\end{align}

Define:

\begin{align}
\beta_j := \int_0^1 \ell_j(\tau)\,\text{d}\tau, \quad j = -1, 0, \dots, k-1.
\end{align}

Then we get:

\begin{align}
\widehat{\int_x^{x+h} f(s)\,\text{d}s}= h \left( \sum_{j = -1}^{k-1} \beta_j f(x -h+ \tau_j h) \right)
= h \left( \beta_{-1} f(x - h) + \sum_{j = 0}^{k-1} \beta_j f(x + jh) \right).
\end{align}

Substitute into the ODE integral:

\begin{align}
\hat{y}_{n-1} = y_{n} + h \left( \beta_{-1} f_{n-1} + \sum_{j=0}^{k-1} \beta_j f_{n+j} \right).
\end{align}

which is exactly the Adams-Moulton method of order $k$.

Finally, since this is based on Lagrange interpolation over $k + 1$ nodes (including the implicit node at future time), the quadrature error is $\mathcal{O}(h^{k+1})$, leading to method of order $k$.
\end{proof}

We now instantiate the backward Adams-Moulton method of order $3$ from Theorem~\ref{thm:adams-moulton}, and quantify its effect in discrete difference form, including error propagation.

\begingroup
\renewcommand\thetheoremx{3.5}%
\begin{theoremx}[Third-order backward difference via Adams-Moulton estimation]
\label{thm:third-order-estimator}
Let $x_t$ be the trajectory satisfying the ODE: $\frac{\text{d}x_t}{\text{d}t} = y_t$. Consider estimating $x_{t-1}$ by third-order backward difference $\Delta^{(3)} x_{t-1}$ using forward values $x_t, x_{t+1}, x_{t+2}$, where the step size is $\Delta t$. Furthermore, we using second and third order of Adams-Moulton method to define the estimator:
\begin{align}
\hat{x}_{t-1} := x_{t}-\frac{5\Delta t}{6}y_{t}-\frac{5\Delta t}{6}y_{t+1}+\frac{2\Delta t}{3}y_{t+2}.
\end{align}
This estimate is consistent with the discrete ODE, and the local truncation error satisfies:
\begin{align}
\hat{x}_{t-1} - x_{t-1} = \mathcal{O}(\Delta t^2).
\end{align}
\end{theoremx}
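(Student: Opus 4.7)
The plan is to prove Theorem~\ref{thm:third-order-estimator-1} by a direct Taylor-expansion argument that verifies the coefficients in $\hat{x}_{t-1}$ reproduce the exact expansion of $x_{t-1}$ about $\tau_{t}$ through the $\Delta t^{2}$ term, pushing the mismatch to order $\Delta t^{3}$. A more conceptual reading, which I would use as a cross-check, interprets the estimator as a Lagrange-based Adams--Moulton quadrature in which the implicit value $y_{t-1}$ has been removed via the backward-extrapolation formula of Theorem~\ref{thm:backward-extrap_1}; the bound then follows by combining the quadrature error from Theorem~\ref{thm:adams-moulton} with the extrapolation error.

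Concretely, first I would use $\tfrac{\text{d}x}{\text{d}t}=y_{t}$ to write $x_{t-1} = x_{t} - \Delta t\, y_{t} + \tfrac{\Delta t^{2}}{2}\, y'_{t} - \tfrac{\Delta t^{3}}{6}\, y''_{t} + \mathcal{O}(\Delta t^{4})$, with derivatives evaluated at $\tau_{t}$. Next I would Taylor-expand $y_{t+1} = y_{t} + \Delta t\, y'_{t} + \tfrac{\Delta t^{2}}{2}\, y''_{t} + \mathcal{O}(\Delta t^{3})$ and $y_{t+2} = y_{t} + 2\Delta t\, y'_{t} + 2\Delta t^{2}\, y''_{t} + \mathcal{O}(\Delta t^{3})$, substitute both into the definition of $\hat{x}_{t-1}$, and collect by powers of $\Delta t$. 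Two arithmetic checks then do the work: the coefficient of $y_{t}$ reads $-\Delta t\bigl(\tfrac{5}{6}+\tfrac{5}{6}-\tfrac{2}{3}\bigr) = -\Delta t$, while the coefficient of $y'_{t}$ reads $\Delta t^{2}\bigl(-\tfrac{5}{6}+\tfrac{4}{3}\bigr) = \tfrac{\Delta t^{2}}{2}$, both of which match the exact expansion. The leftover terms, dominated by $\Delta t^{3}\, y''_{t}$ and higher-order derivatives, yield $\hat{x}_{t-1} - x_{t-1} = \mathcal{O}(\Delta t^{3}) \subseteq \mathcal{O}(\Delta t^{2})$, proving the claim (in fact slightly tighter) and immediately delivering consistency with the discrete ODE since the error is $o(\Delta t)$.

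The main obstacle is not the Taylor bookkeeping itself but explaining why the particular coefficients $-\tfrac{5}{6}, -\tfrac{5}{6}, \tfrac{2}{3}$ deserve the ``second- and third-order Adams--Moulton'' label and how that construction interacts with the backward-extrapolation error. I would address this by writing the estimator as an affine combination of the order-two and order-three AM quadratures for $\int_{\tau_{t-1}}^{\tau_{t}} y(s)\,\text{d}s$ from Theorem~\ref{thm:adams-moulton}, each made explicit by eliminating the implicit $y_{t-1}$ via the Lagrange backward extrapolation of Theorem~\ref{thm:backward-extrap_1}. The local truncation error then splits into the AM quadrature error $\mathcal{O}(\Delta t^{k+1})$ plus a contribution of order $|\beta_{-1}|\cdot \Delta t \cdot \mathcal{O}(\Delta t^{k})$ from substituting the extrapolated $y_{t-1}$. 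The delicate points are (i) tracking signs for the backward interval $\tau_{t-1}<\tau_{t}$, (ii) ensuring that the extrapolation error absorbed through the $y_{t-1}$ slot is not amplified enough to drop the overall order below $\Delta t^{2}$, and (iii) verifying that the smoothness required for the higher-derivative remainders is compatible with Assumption~\ref{assump:net-lipschitz} and the regularity already invoked in Theorem~\ref{thm:network-consistency}.
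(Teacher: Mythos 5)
Your proof is correct, and it takes a genuinely more elementary route than the paper's. The paper derives the estimator constructively: it starts from the backward-difference extrapolation $\tilde{x}_{t-1}=3x_t-3x_{t+1}+x_{t+2}=x_t+2(x_t-x_{t+1})-(x_{t+1}-x_{t+2})$, substitutes the third-order Adams--Moulton quadrature $x_t-x_{t+1}=-\tfrac{\Delta t}{12}(5y_t+8y_{t+1}-y_{t+2})+\mathcal{O}(\Delta t^{3})$ and the trapezoidal (second-order AM) quadrature $x_{t+1}-x_{t+2}=-\tfrac{\Delta t}{2}(y_{t+1}+y_{t+2})+\mathcal{O}(\Delta t^{2})$, and then combines these substitution errors with the extrapolation error $\tilde{x}_{t-1}-x_{t-1}=-\Delta^{(3)}x_{t-1}$ to conclude $\mathcal{O}(\Delta t^{2})$. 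Your direct Taylor verification bypasses that construction entirely, and your arithmetic checks out: the coefficients $-\tfrac{5}{6}-\tfrac{5}{6}+\tfrac{2}{3}=-1$ and $-\tfrac{5}{6}+\tfrac{4}{3}=\tfrac{1}{2}$ reproduce the exact expansion of $x_{t-1}$ through order $\Delta t^{2}$, while the $\Delta t^{3}$ coefficient $-\tfrac{5}{12}+\tfrac{4}{3}=\tfrac{11}{12}$ fails to match the exact $-\tfrac{1}{6}$, so the error is $\tfrac{13}{12}\Delta t^{3}y_t''+\mathcal{O}(\Delta t^{4})=\mathcal{O}(\Delta t^{3})$. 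This is strictly sharper than the paper's stated bound (the paper gives away an order by quoting the trapezoidal substitution error as $\mathcal{O}(\Delta t^{2})$ rather than its true local value $\mathcal{O}(\Delta t^{3})$), and it subsumes the claim since $\mathcal{O}(\Delta t^{3})\subseteq\mathcal{O}(\Delta t^{2})$. What the paper's route buys is an account of where the coefficients $-\tfrac{5}{6},-\tfrac{5}{6},\tfrac{2}{3}$ come from; what yours buys is a shorter, self-contained proof with a tighter order. Your secondary ``conceptual'' cross-check is close in spirit to, but not identical with, the paper's derivation --- the paper extrapolates $x$ backward and applies AM quadratures to the forward differences of $x$, rather than eliminating the implicit node $y_{t-1}$ from a single AM formula for $\int_{\tau_{t-1}}^{\tau_t}y(s)\,\mathrm{d}s$ --- but either bookkeeping closes the argument, and the smoothness you flag ($x\in C^{3}$, say) is exactly what the paper also implicitly assumes.
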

\endgroup

\begin{proof}
By Theorem~\ref{thm:adams-moulton}, the third-order Adams-Moulton method in reverse-time (backward extrapolation) reads:

\begin{align}
\widehat{x_{t-1} - x_t} = \frac{\Delta t}{12} \left(8 y_t - y_{t+1} + 5 y_{t-1} \right).
\end{align}

In our scenario, we aim to express everything in terms of $x_t, x_{t+1}, x_{t+2}$, and $y_t, y_{t+1}, y_{t+2}$.

By using the identity:

\begin{align}
\Delta^{(3)} x_{t-1} = x_{t-1} - 3x_t + 3x_{t+1} - x_{t+2},
\end{align}

we reverse it to get the original estimation of $\tilde{x}_{t-1}$:

\begin{align}
\tilde{x}_{t-1} = 3x_t - 3x_{t+1} + x_{t+2} = x_t + 2(x_t - x_{t+1}) - (x_{t+1} - x_{t+2}).
\end{align}

Then we plug in a higher-order correction from the Adams-Moulton method:

\begin{align}
x_t - x_{t+1} = -\frac{\Delta t}{12}(5y_t + 8y_{t+1} - y_{t+2})+\mathcal{O}(\Delta t^{3}),
\quad
x_{t+1} - x_{t+2} = -\frac{\Delta t}{2}(y_{t+1} + y_{t+2}) + \mathcal{O}(\Delta t^2).
\end{align}

Combining these, we define the estimator $\hat{x}_{t-1}$:

\begin{align}
\hat{x}_{t-1} =  x_{t}-\frac{5\Delta t}{6}y_{t}-\frac{5\Delta t}{6}y_{t+1}+\frac{2\Delta t}{3}y_{t+2}.
\end{align}

Thus we have the error bound
\begin{align}
    \hat{x}_{t-1}-x_{t-1} &= x_{t}-\frac{5\Delta t}{6}y_{t}-\frac{5\Delta t}{6}y_{t+1}+\frac{2\Delta t}{3}y_{t+2}-x_{t-1}\nonumber\\
    &=\tilde{x}_{t-1}+\mathcal{O}(\Delta t^2)-x_{t-1}\nonumber\\
    &=\mathcal{O}(\Delta t^2),
\end{align}
as claimed.

\end{proof}

\begingroup
\renewcommand\thetheoremx{3.6}%
\begin{theoremx}[Error bound for final reconstruction $\hat{x}_0^t$]

\label{thm:x0t-error-bound}

Let $\hat{x}_0^t$ denote the reconstruction of $x_0$ at time $t$, obtained using an extrapolated estimate $\hat{x}_{t-1}$ from Theorem~\ref{thm:third-order-estimator}, which incurs an error of order $\mathcal{O}(\Delta t^2)$. Then, the final reconstruction $\hat{x}_0^t$ satisfies the following error bound:

\begin{align}
\|\hat{x}_0^t - x^{t}_0\| = \mathcal{O}(\Delta t) + \mathcal{O}(\Delta x_t).
\end{align}

\end{theoremx}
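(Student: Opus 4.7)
The plan is to decompose the target error $\hat{x}_0^t - x_0^t$ into a state-extrapolation error and a noise-reuse error, then bound each using Theorem~\ref{thm:third-order-estimator} and the joint-Lipschitz hypothesis~\textbf{\ref{assump:net-lipschitz}}. First, I would expand both the true and the approximated reconstructions via the closed-form data-prediction identity $x_0^{\tau} = \bar{\alpha}_\tau^{-1/2}\bigl(x_\tau - \sqrt{1-\bar{\alpha}_\tau}\,\epsilon_\theta(x_\tau,\tau)\bigr)$, plugging the Adams--Moulton estimate $\hat{x}_{t-1}$ together with the reused noise $\hat{\epsilon}_{t-1}\leftarrow \epsilon_t$ into the approximated version.

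After subtracting the two expressions and applying the triangle inequality, the residual naturally splits into two pieces: a state piece proportional to $\hat{x}_{t-1} - x_{t-1}$ and a network piece proportional to $\epsilon_\theta(x_t, t) - \epsilon_\theta(x_{t-1}, t-1)$. The state piece is controlled directly by Theorem~\ref{thm:third-order-estimator} at order $\mathcal{O}(\Delta t^2)$, which is subsumed by $\mathcal{O}(\Delta t)$. For the network piece, I would invoke joint Lipschitz continuity of $\epsilon_\theta$ in $(x,t)$ to obtain $\|\epsilon_\theta(x_t,t)-\epsilon_\theta(x_{t-1},t-1)\| \leq L_x\|x_t - x_{t-1}\| + L_t\,\Delta t$, which is exactly $\mathcal{O}(\Delta x_t) + \mathcal{O}(\Delta t)$ under the interpretation of $\Delta x_t$ as the inter-step sample variation. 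Since \textbf{\ref{assump:net-lipschitz}} excludes the boundary timesteps, the scalar prefactors $\bar{\alpha}_{t-1}^{-1/2}$ and $\sqrt{1-\bar{\alpha}_{t-1}}$ remain uniformly bounded and can be absorbed into the $\mathcal{O}$-constants.

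The main obstacle I anticipate is the bookkeeping of which evaluation point each network call sits at: the reused call lives at $(x_t,t)$, the "ideal" call at $(x_{t-1},t-1)$, and the Adams--Moulton extrapolation contributes a third state $\hat{x}_{t-1}$. A clean way to handle this is to insert $\epsilon_\theta(x_{t-1},t-1)$ as a pivot, so the triangle inequality cleanly isolates the pure extrapolation residual (bounded by Theorem~\ref{thm:third-order-estimator}) from the noise-reuse residual (bounded by Lipschitz continuity across the $(x,t)$ pair). Once the splitting is in place, the remaining work is mechanical: assemble the bounds, drop higher-order terms such as $\Delta t^2$, and collect the constants to deliver $\|\hat{x}_0^t - x_0^t\| = \mathcal{O}(\Delta t) + \mathcal{O}(\Delta x_t)$, matching Theorem~\ref{thm:x0t-error-bound-1}.
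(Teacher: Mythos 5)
Your proposal is correct and follows essentially the same route as the paper's proof: both exploit the linearity of the data-prediction formula to split the error into a state term (controlled by the Adams--Moulton truncation error of Theorem~\ref{thm:third-order-estimator}, which is $\mathcal{O}(\Delta t^2)\subset\mathcal{O}(\Delta t)$) and a noise-reuse term (controlled by the joint Lipschitz assumption, giving $\mathcal{O}(\Delta x_t)+\mathcal{O}(\Delta t)$). Your write-up is in fact more explicit than the paper's two-line argument about which evaluation points the network calls sit at and why the schedule prefactors stay bounded, but the underlying decomposition is identical.
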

\endgroup
\begin{proof}
This follows from the linearity of the prediction formula for $\hat{x}_0^t$, which is a linear combination of $x_t$ and $\hat{\epsilon}_t$. Therefore, by following\textbf{~\ref{assump:net-lipschitz}} that $\epsilon_\theta(x_{t}, t)$ is Lipschitz in $x_{t}$ and $t$
\begin{align}
\|\hat{x}_0^t - x^{t}_0\| \leq C_1 \|\hat{x}_t - x_t\| + C_2 \|\hat{\epsilon}_t - \epsilon_t\| = \mathcal{O}(\Delta t) + \mathcal{O}(\Delta x_t),
\end{align}
where $C_1$ and $C_2$ are constants depending on the schedule. 
\end{proof}

\section{Analysis on Existing Token Reduction Methods}
\label{sec:appendix_section}

Compared to the baseline, token merging with a high merging ratio significantly loses detailed information, while token pruning at the same ratio partially preserves fine-grained details. However, pruning still leads to the loss of a substantial amount of information in the generated image. In this section, we provide a short analysis of token merging, token pruning, and the unmerging process. 

\paragraph{Token merging} We represent the token-merging procedure via an \(N' \times N\) matrix \( M \):
\[
M_{j,i} \;=\;
\begin{cases}
\frac{1}{|S_j|}, & \text{if } i \in S_j,\\
0, & \text{otherwise},
\end{cases}
\]
where each set \( S_j \subseteq \{1,\ldots,N\} \) groups tokens deemed similar. The merged sequence \(\mathbf{x}'\) is then obtained by:
\[
\mathbf{x}' \;=\; M\, \mathbf{x}.
\]
Within each subset \(S_j\), the operation is an average: \(h_j = \frac{1}{|S_j|}\mathbf{1}^T\). From a signal-processing perspective, such an averaging operation has a \emph{low-pass} frequency response described by the sinc function:
\[
H(u) \;=\; \frac{\sin\!\bigl(\pi\,u\,|S_j|\bigr)}{\pi\,u\,|S_j|}.
\]
Hence, merging tokens suppresses high-frequency components while retaining lower-frequency content.

\paragraph{Token pruning} Token pruning also cause information loss, as any unique feature captured by the pruned tokens are no longer available for subsequent computations. Since it removes tokens without averaging, token pruning does not exhibit the low-pass filtering effect, resulting in the loss of specific spatial information rather than a smoothing of the input sequence.

\paragraph{Unmerging process} Furthermore, due to the non-linear mapping of self-attention, the high similarity of two input tokens $x[i], x[j]$ (where $i \in S_j$) does not guarantee the high similarity of corresponding outputs after attention computation. Therefore, the unmerging procedure, which duplicates the processed merged tokens back to their original position results in an inherent downsampling effect. This duplication does not recover the high-frequency details lost during merging, thus further degrading the detailed features in the output.

\section{Additional Experiments}
\label{sec:add_exp}

\begin{figure*}[ht] 
    \centering\includegraphics[width=0.65\textwidth]{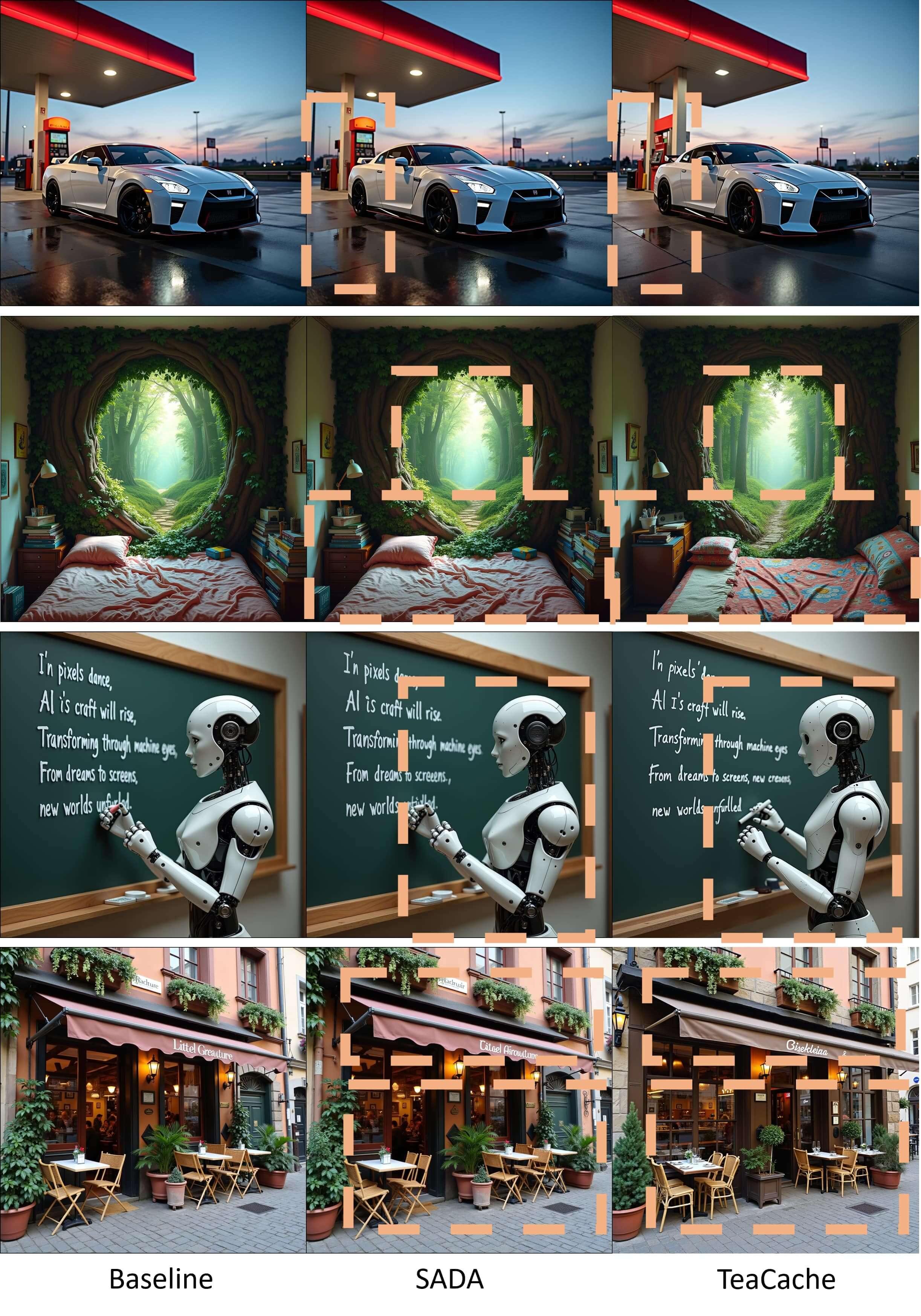}
    \caption{Comparison between SADA and TeaCache on FLux.1 Dev. Our method shows significantly better faithfulness under an identical speedup ratio of 2.0$\times$.}
    \label{fig:demo1}
\end{figure*}

\begin{figure*}[ht] 
    \centering\includegraphics[width=0.8\textwidth]{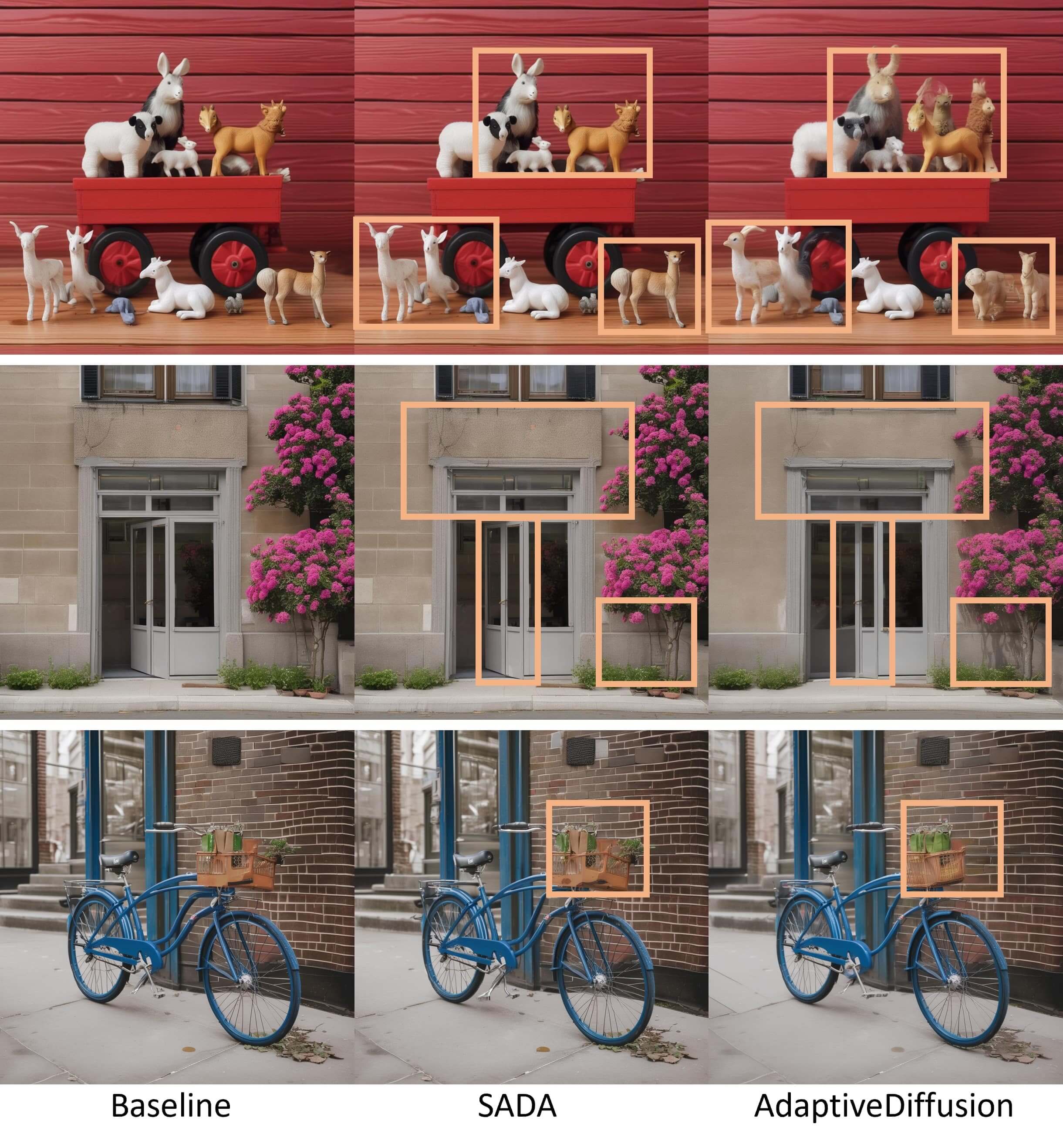}
    \caption{Comparison between SADA and AdaptiveDiffusion on SDXL with 50 steps DPM++. Our method shows better faithfulness with a much faster speedup of 1.81$\times$ vs 1.65$\times$.}
    \label{fig:demo2}
\end{figure*}

\begin{figure*}[ht] 
    \centering\includegraphics[width=\textwidth]{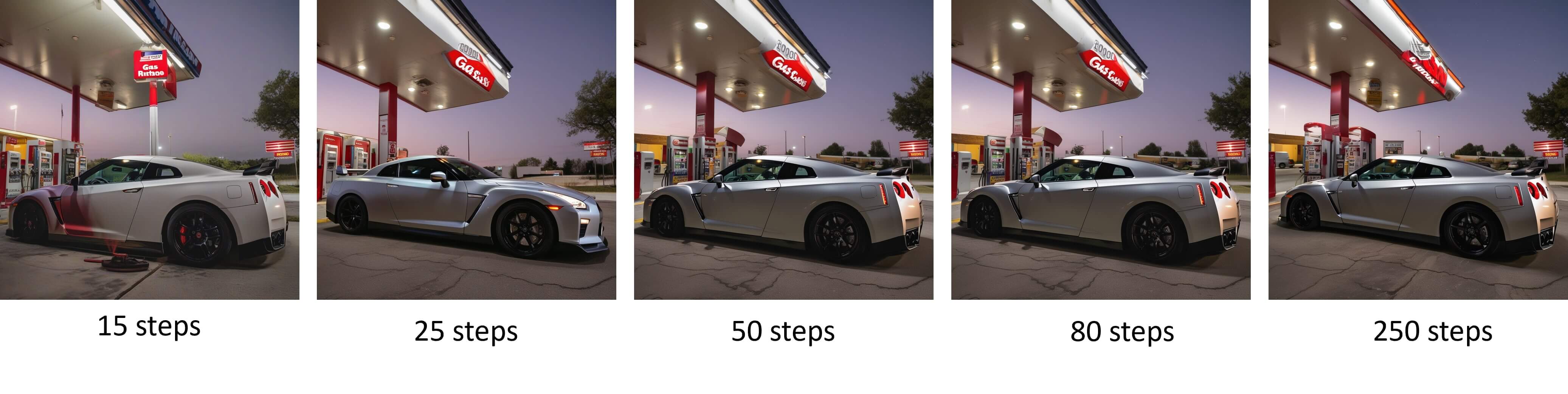}
    \caption{The generated samples from DPM-Solver first dramatically change, then demonstrate convergence after setting the sample step to 50. Therefore, we believe our baseline setting is reasonable.}
    \label{fig:demo3}
\end{figure*}


\end{document}
